\documentclass[letter,journal]{IEEEtran}
\usepackage{amsmath,amssymb,amsfonts,amsthm}
\usepackage{algorithmic}
\usepackage{array}
\usepackage{textcomp}
\usepackage{stfloats}
\usepackage{url}
\usepackage{verbatim}
\usepackage{graphicx}

\usepackage{diagbox}
\usepackage{siunitx} 
\ifCLASSOPTIONcompsoc
    \usepackage[caption=false, font=normalsize, labelfont=sf, textfont=sf]{subfig}
\else
\usepackage[caption=false, font=footnotesize]{subfig}
\fi

\usepackage{cite}
\usepackage{multirow}
\usepackage{multicol}

\usepackage[utf8]{inputenc}
\usepackage{enumitem}

\usepackage{xcolor}
\usepackage{tabularx}
\usepackage{colortbl}

\usepackage{setspace}
\allowdisplaybreaks

\usepackage{tikz}

\definecolor{lime}{rgb}{0.88,2,10}

\usepackage[linesnumbered,ruled,vlined]{algorithm2e}

\SetKwComment{Comment}{$\triangleright$\ }{}

\usepackage{nomencl}
\makenomenclature

\makeatletter
\newcounter{subparagraph}[paragraph]
\renewcommand\thesubparagraph{%
  \theparagraph.\@arabic\c@subparagraph}
\newcommand\subparagraph{%
  \@startsection{subparagraph}    
    {5}                              
    {\parindent}                     
    {2ex \@plus 1ex \@minus .6ex} 
    {0em}                           
    {\normalfont\normalsize\bfseries}}

\usepackage{blindtext}
\usepackage[T1]{fontenc}
\usepackage[utf8]{inputenc}
\usepackage[spaces,hyphens]{xurl}
\newcommand*{\Resize}[2]{\resizebox{#1}{!}{$#2$}}%

\DeclareFontFamily{OT1}{pzc}{}
\DeclareFontShape{OT1}{pzc}{m}{it}{<-> s * [1.10] pzcmi7t}{}
\DeclareMathAlphabet{\mathpzc}{OT1}{pzc}{m}{it}
\renewcommand\labelenumi{(\roman{enumi})}
\renewcommand\theenumi\labelenumi

\makeatother
\setcounter{tocdepth}{5} 
\setcounter{secnumdepth}{5}

\newcommand{\fref}[1]{Fig.~\ref{#1}}

\newcommand{\sref}[1]{Section~\ref{#1}}
\usepackage[misc]{ifsym}

\SetKwInput{kwInit}{Initialize}

\usepackage{enumitem}

\usepackage{amsthm}
\newtheorem{asu}{Assumption}
\newcounter{subassumption}[asu]

\makeatletter
\renewcommand{\p@subassumption}{\theasu}
\makeatother

\newtheorem{theorem}{Theorem}

\newtheorem{lemma}{Lemma}

\let\oldnl\nl
\newcommand{\nonl}{\renewcommand{\nl}{\let\nl\oldnl}}

\usepackage{fancyhdr}
\fancypagestyle{mystyle}{
    \chead{\large IEEE Journal on Selected Areas in Communications (JSAC), 2024}}

\begin{document}
\title{{Communication-Efficient Federated Learning for LEO Satellite Networks Integrated with HAPs Using Hybrid NOMA-OFDM}}
  \author{Mohamed Elmahallawy,~\IEEEmembership{Student Member,~IEEE,}
      Tie Luo$^*$\thanks{$^*$Corresponding author.},~\IEEEmembership{Senior Member, IEEE}, and Khaled Ramadan
\thanks{M. Elmahallawy and T. Luo are with the Department of Computer Science, Missouri University of Science and Technology, USA (e-mail: \{meqxk, tluo\}@mst.edu). K. Ramadan is with the Department of Electronics and  Communication Engineering, The Higher Institute of Engineering in Elshorouk City, Cairo, Egypt (e-mail: k.ramadan@sha.edu.eg).\\
\indent This work was supported by the National Science Foundation (NSF) under Grant No. 2008878.} }



\maketitle
\thispagestyle{mystyle}
\begin{abstract}
Space AI has become increasingly important and sometimes even necessary for government, businesses, and society. An active research topic under this mission is integrating federated learning (FL) with satellite communications (SatCom) so that numerous low Earth orbit (LEO) satellites can collaboratively train a machine learning model. However, the special communication environment of SatCom leads to a very slow FL training process up to days and weeks. This paper proposes NomaFedHAP, a novel FL-SatCom approach tailored to LEO satellites, that (1) utilizes high-altitude platforms (HAPs) as distributed parameter servers ($\mathcal {PS}$s) to enhance satellite visibility, and (2) introduces non-orthogonal multiple access (NOMA) into LEO to enable fast and bandwidth-efficient model transmissions. In addition, NomaFedHAP includes (3) a new communication topology that exploits HAPs to bridge satellites among different orbits to mitigate the Doppler shift, and (4) a new FL model aggregation scheme that optimally balances models between different orbits and shells. Moreover, we (5) derive a closed-form expression of the outage probability for satellites in near and far shells, as well as for the entire system. Our extensive simulations have validated the mathematical analysis and demonstrated the superior performance of NomaFedHAP in achieving fast and efficient FL model convergence with high accuracy as compared to the state-of-the-art. 
\end{abstract}
\begin{IEEEkeywords}
Low Earth orbit (LEO), federated Learning, high altitude platform (HAP), non-orthogonal multiple Access (NOMA).
\end{IEEEkeywords}
\section{Introduction}

\IEEEPARstart{S}atellite communication (SatCom) technology is considered a major player of the Internet of Remote Things (IoRT) \cite{OliveBranch}. Recent advancements in SatCom have stimulated giant companies such as SpaceX, Amazon, and OneWeb, as well as government agencies such as ESA and NASA, to launch a large number of small satellites into space on low Earth orbits (LEOs) \cite{pachler2021updated}. These satellites are equipped with high-resolution cameras and collect massive satellite imagery \cite{zhai2023fedleo }. Meanwhile, the booming developments of AI have motivated the leverage of machine learning (ML) to perform satellite data analytics. However, traditional ML approaches which require downloading the massive imagery from satellites to a ground station (GS) are not practical due to the limited SatCom bandwidth. In this regard, federated Learning (FL) \cite{mcmahan2017communication, xie2020asynchronous} offers a potential solution as it replaces {\em data} transmission with {\em model} transmission, where each satellite trains an ML model onboard using its own data and then only sends the trained model to the GS. This substantially reduces the demand for bandwidth and gains the additional benefit of preserving data privacy.

However, integrating FL with SatCom is non-trivial and involves significant challenges. This is because FL entails an iterative training process that typically involves hundreds of communication rounds between clients and the parameter server ($\mathcal {PS}$). While this is not a big issue in usual networks, it is a critical bottleneck in SatCom/LEO, where clients are LEO satellites and the $\mathcal {PS}$ is the GS, due to four factors. First, the \textit{highly sporadic, intermittent, and irregular connectivity pattern} between LEO satellites and the GS. This is caused by the distinction of travel trajectories and speeds between LEO satellites and the $\mathcal{PS}$, where satellites orbit the earth with an {\em inclination angle} between 0--90$^o$ while $\mathcal{PS}$ travels along the Earth rotation direction. Second, the \textit{long propagation and transmission delays} in SatCom, due to the long distance and low data rate. Third, FL models are often large (e.g., 528/549MB for VGG16/19, 232MB for ResNet152, 479MB for EfficientNetV2L) because of the high image resolution and accuracy dictated by satellite applications such as national and homeland security, disaster and weather monitoring. Lastly, the wireless channels between LEO satellites and GS are unreliable due to adverse weather conditions like rain, fog, wind turbulence, and interference from other radio signals especially near the Earth's surface. As a result, the short visible time window between a satellite and the $\mathcal{PS}$ is often insufficient to allow the complete transmission of a model, especially when multiple satellites' transmissions are involved. Ultimately, the above challenges significantly impede the FL training process and result in slow convergence that takes days or even weeks \cite{chen2022satellite, razmi}. 

In this paper, we propose a novel FL-SatCom framework called \textit{NomaFedHAP} that is tailored to LEO satellites to address the above challenges. First, it introduces non-orthogonal multiple access (NOMA) as a spectrum-efficient communication scheme into FL-SatCom to enable satellites at different orbit shells (hence different altitudes) 
to transmit models concurrently from/to the $\mathcal {PS}$. 
Second, following our previous work \cite{happaper}, NomaFedHAP utilizes multiple high-altitude platforms (HAPs) in lieu of GSs to act as distributed $\mathcal {PS}$s to improve satellite visibility. But unlike \cite{happaper}, NomaFedHAP introduces a new satellite-HAP communication topology in which each HAP also serves as a {\em relay} to forward models among HAPs. This has the benefit of handling multiple orbits of satellites without inter-orbit communication, thus avoiding significant Doppler shifts. Third, NomaFedHAP proposes an FL model aggregation scheme that optimally balances the number of models between different orbits and shells prior to model aggregation, thereby avoiding a biased global model.

In summary, this paper makes the following contributions:
\begin{itemize}
    \item To the best of our knowledge, NomaFedHAP is the first work that introduces NOMA to FL-SatCom, addressing the link budget limit and enabling concurrent transmissions of large FL models within short and sporadic visible windows. It also uses HAPs instead of GS as $\mathcal {PS}$ to achieve faster convergence.

    \item We propose to use orthogonal frequency division multiplexing (OFDM) for communication among satellites {\em in the same orbit}. This results in a hybrid NOMA-OFDM communication scheme. Furthermore, we derive a closed-form expression of the outage probability for satellites located in near and far orbit shells, as well as for the entire system.
   
    \item We also propose (i) a new {\it satellite-HAP communication topology} in substitution of the traditional star topology of FL, where we let HAPs act as inter-orbit relays to mitigate the Doppler shift, (ii) a new \textit{intra-orbit model propagation} algorithm that utilizes inter-satellite links (ISL) and sub-orbital model aggregation to enable ``straggler'' satellites to participate in FL without waiting for their visible windows, and (iii) a new \textit{FL model aggregation} scheme that optimally balances the number of models between different orbits to avoid a biased global model.

    \item We extensively evaluate NomaFedHAP using 3 common datasets and a real satellite dataset, and demonstrate its efficient bandwidth utilization and high data rate when transmitting FL models between satellites and HAPs. We also show that NomaFedHAP accelerates FL convergence by an order of magnitude 
    as compared to state-of-the-art FL-SatCom algorithms (4 vs. 72 hours), while achieving the highest model accuracy.
\end{itemize}
\section{Related work}\label{sec:releated}
While research in FL-SatCom is still in a nascent stage, a decent number of interesting studies have recently appeared \cite{chen2022satellite, razmi,happaper,lin2022federated, chen2023edge, wu2022dsfl, e2023opt,razmi2022ground, razmi2022scheduling,so2022fedspace,wang2022fl, mAsyFLEO, wu2023fedgsm, elone,elmahallawy2023secure} and have made appealing progress. 

\textbf{Synchronous FL.} In synchronous FL, the $\mathcal{PS}$ (e.g., GS) has to wait until all LEO satellites complete training and send their local models to $\mathcal{PS}$ when they sequentially come into its visible zone. Only after that, the $\mathcal {PS}$ will proceed to aggregate those received models into a \textit{global model} and then start the next communication round by sending the global model back to all the satellites for retraining. In such synchronous FL, slow satellites or ``stragglers'', who have limited visibility to the $\mathcal {PS}$, will become the bottleneck of the training process, where fast satellites have to idly wait.

Despite this, synchronous FL as a simple (and hence desirable) approach has been applied to LEO constellations in several studies. For instance, \cite{chen2022satellite} adopted the traditional synchronous FL approach  (i.e, FedAvg \cite{mcmahan2017communication}) without any tailoring for LEO constellations and demonstrated that FL is more advantageous than a direct download of raw data to a centralized server for training. The study, however, did not take into account the satellite-GS visibility challenge which slows down the FL training processes significantly. To deal with this challenge, \cite{razmi} proposed an intra-orbit routing technique called FedISL specifically designed for FL-SatCom. It performs well when the $\mathcal{PS}$ is a GS situated either at the North Pole (NP) or a medium Earth orbit (MEO) satellite flying above the Equator (at an altitude of 20,000 km). In these cases, each satellite visits the $\mathcal{PS}$ at regular intervals, resulting in more frequent communication and hence faster convergence. However, when the $\mathcal{PS}$ is located at a different position, the convergence time increases significantly, taking several days instead of just a few hours.
Moreover, NP is an ideal location which is often unavailable in practice, and MEO would incur considerable Doppler shifts. To address this limitation, FedHAP \cite{happaper} introduces HAPs as a proxy of GS to LEO constellations without restriction on locations. However, the FL model still requires more than a day to converge due to the non-ideal $\mathcal {PS}$ locations. Lin et al. \cite{lin2022federated} proposed an approach to dynamically aggregate satellite models based on connection density, excluding stragglers in cases of sparse connections, and involving the collaboration of multiple GSs at distributed locations. However, ensuring model consistency at multiple GSs is nontrivial and incurs more overhead, and excluding straggler satellites can introduce bias in the global model towards frequently visible satellites. In \cite{chen2023edge}, a clustering and edge selection approach is proposed, where a GS selects an LEO server, clusters neighboring LEO clients with good channel quality, and then selects satellites within each cluster to participate in the training process. However, this may result in a biased model toward satellites with good channel quality. The authors of \cite{wu2022dsfl} proposed a decentralized learning paradigm, eliminating the need for a $\mathcal {PS}$. They utilize intra- and inter-orbit ISLs with a traditional communication scheme like OFDM \cite{el2019performance}, attempting to address convergence speed; however, this approach still requires a higher data rate due to the Doppler shift among different orbits.
Moreover, all the above studies evaluate their models on non-SatCom-related datasets, and the convergence time still takes long hours and days. Most recently, FedLEO \cite{e2023opt} enhances the FL convergence through the use of intra-plane model propagation and scheduling of sink satellites, and uses a real satellite dataset to test their model. On the other hand, it requires each satellite to run a scheduler to determine a sink satellite, leading to delays during model exchanges with the GS.

{\bf Asynchronous FL.}  Unlike synchronous FL, asynchronous FL allows the $\mathcal {PS}$ to receive only a \textit{subset} of the satellites' (typically fast ones') model updates to proceed to the next communication round. This mitigates the idleness problem in synchronous FL, but faces another problem called \textit{model staleness}, where outdated models from straggler satellites will arrive in later communication rounds, potentially affecting the model convergence and performance adversely (e.g., FedAsync \cite{xie2020asynchronous}).

Razmi et al. \cite{razmi2022ground} proposed FedSat, an asynchronous FL approach tailored for SatCom. FedSat adapts FedAvg \cite{mcmahan2017communication} to an asynchronous setting, where it averages the received satellite models based on their visibility order, assuming regular satellite visits to the GS. In response to this ideal consideration, they proposed another work \cite{razmi2022scheduling}, FedSatSchedule, that considers the location of the GS can be anywhere. FedSatSchedule is developed to reduce model staleness by determining whether each satellite has sufficient visible time for downloading the global model, training a local model, and uploading it. Otherwise, the satellite will schedule uploading its local model for the next communication round, allowing it to train its local model during its ``long'' invisible period. However, FedSatSchedule has only limited improvement in efficiency, still requiring several days for an FL model to converge. The authors of \cite{so2022fedspace} proposed another asynchronous FL approach, FedSpace, which stores satellite models into a buffer with a predicted size and reduces the weighting of stale models.  On the other hand, this method requires each satellite to upload a small amount of its raw data to the GS in order to be used for scheduling the model aggregation, which is contrary to the FL principle of maintaining privacy by not sharing raw data. Wang et al. \cite{wang2022fl} proposed a graph-based routing and resource reservation algorithm aimed at optimizing the delay in FL model parameter transfer. The algorithm enhances a storage time-aggregated graph, enabling a joint representation of the satellite networks' transmission, storage, and computing resources. AsyncFLEO \cite{mAsyFLEO} is a more recent asynchronous FL approach that offers a solution to the staleness challenge, where it first groups satellites from different orbits based on the similarity of their models, and then selects only fresh models from each group to include in the model aggregation. Outdated satellite models are only selected when it is the only model in a group and will be down-weighted during aggregation. Finally, Wu et al. \cite{wu2023fedgsm} proposed FedGSM, which implements a compensation mechanism to mitigate gradient staleness. FedGSM utilizes the deterministic and time-varying topology of the orbits to counteract the negative impact of staleness. However, their approach still requires a long time for convergence.

While considerable efforts have been made to accelerate the convergence of the FL-SatCom model and address the challenge of satellite-GS sporadic connectivity, no work has been proposed to formally address the issue of uploading large satellite ML models to the $\mathcal {PS}$ (e.g., GS, MEO, HAPs) within the satellite link budget limit and the sporadic short satellite visibility periods. In addition, most of the existing works rely on direct communication between the server (often a GS) and LEO satellites, not leveraging the potential benefits of utilizing HAPs in space to improve FL training. Moreover, balancing among orbits to mitigate bias was overlooked too. This work aims to fill these gaps.

\section{System Model and Problem Formulation}\label{sec:model}

Consider a general LEO satellite constellation consisting of $N$ orbital shells, where each shell consists of $L$ orbits, all at the same altitude $d_n$\footnote{The orbit shell consists of multiple orbits, each carrying a number of satellites at the same altitude.}. In a shell $n$, each orbit $l \in \mathcal{L}=\{l_1,l_2,...,l_n\}$ carries $K_{l}$ equally distributed homogeneous satellites with an inclination angle of $ \Theta_{l}$. Each satellite in the orbit $l$ has a unique ID and travels at a speed of $v_{k}=\sqrt{\frac{G_e M}{(r_{E}+d_n)}}$ with an orbital period of $T_{k}= \frac{2 \pi (r_{E}+d_n)}{v_{k}}$, where $r_{E}=6371~km$ is the Earth's radius, $G_e$ is the Earth gravitational constant, and $M$ is the mass of the Earth ($G_e M= 3.98\times 10^{14} m^{3}/s^{2}$). In addition, consider $H$ HAPs, where each HAP $\mathpzc{h}$ serves as a $\mathcal {PS}$ and communicates with a diverse set of satellites deployed in different orbits/shells and performs (partial or full) model aggregation. Furthermore, a HAP $\mathpzc{h}$ can perform NOMA among satellites located in different shells based on their distance.

For the communication to be established between any satellite $k\in \mathcal K$  and the $\mathcal {PS}$ (i.e., a set of HAPs), the line of sight (LoS) link between them must not be blocked by the Earth. In mathematical terms, this can be expressed as:\vspace{-0.3cm}
\begin{equation}
    \vartheta_{k,\mathcal {PS}}(t) \triangleq \angle (r_{\mathcal {PS}}(t), (r_{k}(t) - r_{\mathcal {PS}}(t))) \leq \frac{\pi}{2}-\vartheta_{min}
\end{equation}
where $r_{k}(t)$ and $r_{\mathcal {PS}}(t)$ are the trajectories of satellite ${k}$ and the $\mathcal {PS}$, respectively, and $\vartheta_{min}$ is the minimum elevation angle, a constant depends on each HAP's location. 
To account for communication between any satellite and the $\mathcal{PS}$, we consider two particular satellites that are the closest and the farthest to the $\mathcal{PS}$, referred to as the nearest satellite (NS) and the farthest satellite (FS), respectively. An illustration of our system model is given in Fig.~\ref{Earth_observation}.\vspace{-0.5cm}
\begin{figure}[!t]
     \centering
    {{\includegraphics[width=\linewidth]{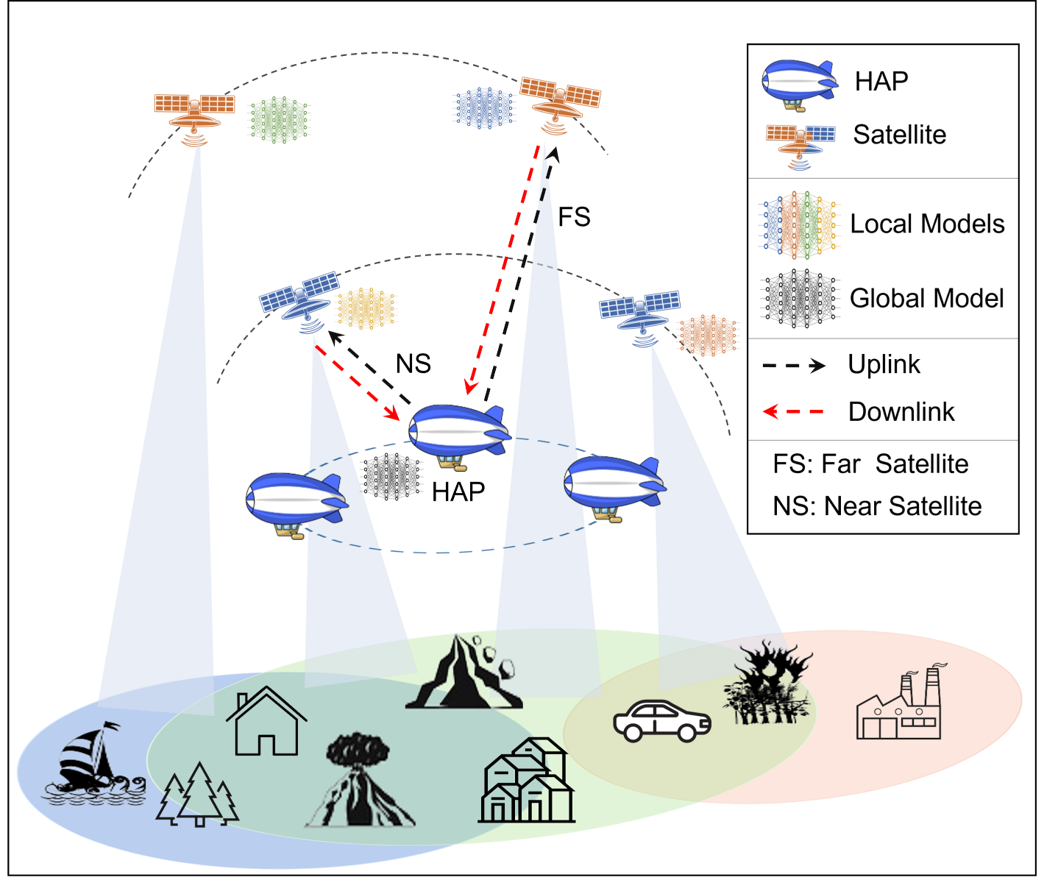}}}
    \caption{An FL system in the context of satellite constellations, using multiple HAPs as parameter servers. For the sake of clarity, only two orbit shells are shown. }
    \label{Earth_observation}
\end{figure}
\subsection{FL-LEO Computation Analysis}
Consider an LEO satellite constellation $\mathcal K$, where each satellite $k$ collects a set of Earth observational images. These images, collected by different satellites, are typically non-independent and identically distributed (non-IID) due to varying orbital speeds, altitudes, and coverage areas. Specifically, a satellite $k$ captures its own dataset $\mathcal {D}_{k}=\{(X_1, {\bf y}_1), (X_{2}, {\bf y}_{2}), \dots, ((X_{m_{k}}, {\bf y}_{m_{k}})\}$, where $X_{i}$ and ${\bf y}_{i}$ are the feature vector and its corresponding label of the \textit{i-th} sample (labels are not required but written here for notation purposes only). 
The overarching objective of the FL-LEO system is to have the LEO satellites and the $\mathcal {PS}$ work collaboratively to train a global ML model with the objective of minimizing the overall loss function $F(\boldsymbol{w})$:
\begin{equation} \label{eqn1}
          \arg \min_{\boldsymbol{w} \in \mathbb{R}^{d}} F(\boldsymbol{w})= \sum_{k\in \mathcal K }{\frac{|\mathcal {D}_{k}|}{|\mathcal {D}|}{F_{k}{(\boldsymbol{w})}},}
\end{equation}
where $\boldsymbol{w}$ is a vector representing the model weights, $|\mathcal {D}|= \sum_{i\in \mathcal K} |\mathcal {D}_{k}|$ is the total number of data samples collected by LEO satellites, and $F_{k}(\boldsymbol{w})$ is satellite $k$'s loss function:
\begin{equation}\label{eqn:loss}
     F_{k}{(\boldsymbol{w})} = \frac{1}{|\mathcal {D}_{k}|}\sum_{i=1}^{|\mathcal {D}_{k}|} f_{k}(\boldsymbol{w}; X_{i}, {\bf y}_{i}),
\end{equation}
where  $f_{k}(\boldsymbol{w}; X_{i}, {\bf y}_{i})$ is the training loss over a data point $X_{i}$.

The training process in a synchronous FL-LEO system such as FedHAP \cite{happaper} requires multiple communication rounds $\beta=0, 1, 2, \dots$, where $|\mathcal{B}|$ is the total number of communication rounds needed to achieve FL model convergence. During each round, the $\mathcal {PS}$ awaits each satellite $k$ to enter its visible zone (transiently) in order to send its aggregated global model $\boldsymbol{w}^\beta$ or to (subsequently) receive this satellite's local model $\boldsymbol{w}_{k}^\beta$.
What happens is that the satellite $k$ carries out a local optimization method such as stochastic gradient descent (SGD) on the received global model $\boldsymbol{w}^\beta$ using its local data, iterating $J$ local epochs to update the model:
\begin{equation}\label{eq:local_update}
    \boldsymbol{w}_{k}^{\beta,j+1} = \boldsymbol{w}_{k}^{\beta,j}- \zeta_{\beta} \nabla F_{k}(\boldsymbol{w}_{k}^{\beta,j}; X_{k}^{j}, {\bf y}_{k}^{j}), ~ j=1,2,...,J
\end{equation}
where $\zeta_{\beta}$ is the learning rate at the round $\beta$. After this training process, the satellite $k$ obtains an updated local model. It then transmits this model back to the $\mathcal {PS}$ when entering the $\mathcal {PS}$'s visible zone again. At the $\mathcal {PS}$, after it receives all the satellites' models, it aggregates them into a global model as
\begin{equation}
    \boldsymbol{w}^{\beta+1} = \sum_{k\in \mathcal K} \frac{|\mathcal {D}_{k}|}{|\mathcal {D}|} \boldsymbol{w}_{k}^{\beta,J}.
\end{equation}
The above procedure repeats, where $\beta$ continuously increases, until the FL model converges, i.e., achieves a target accuracy or loss, or reaches the maximum communication rounds. Fig.~\ref{FL_LEO_Graph} gives an illustration of the FL-LEO system.
\begin{figure}[!t]
\centering
\includegraphics [width=\linewidth]{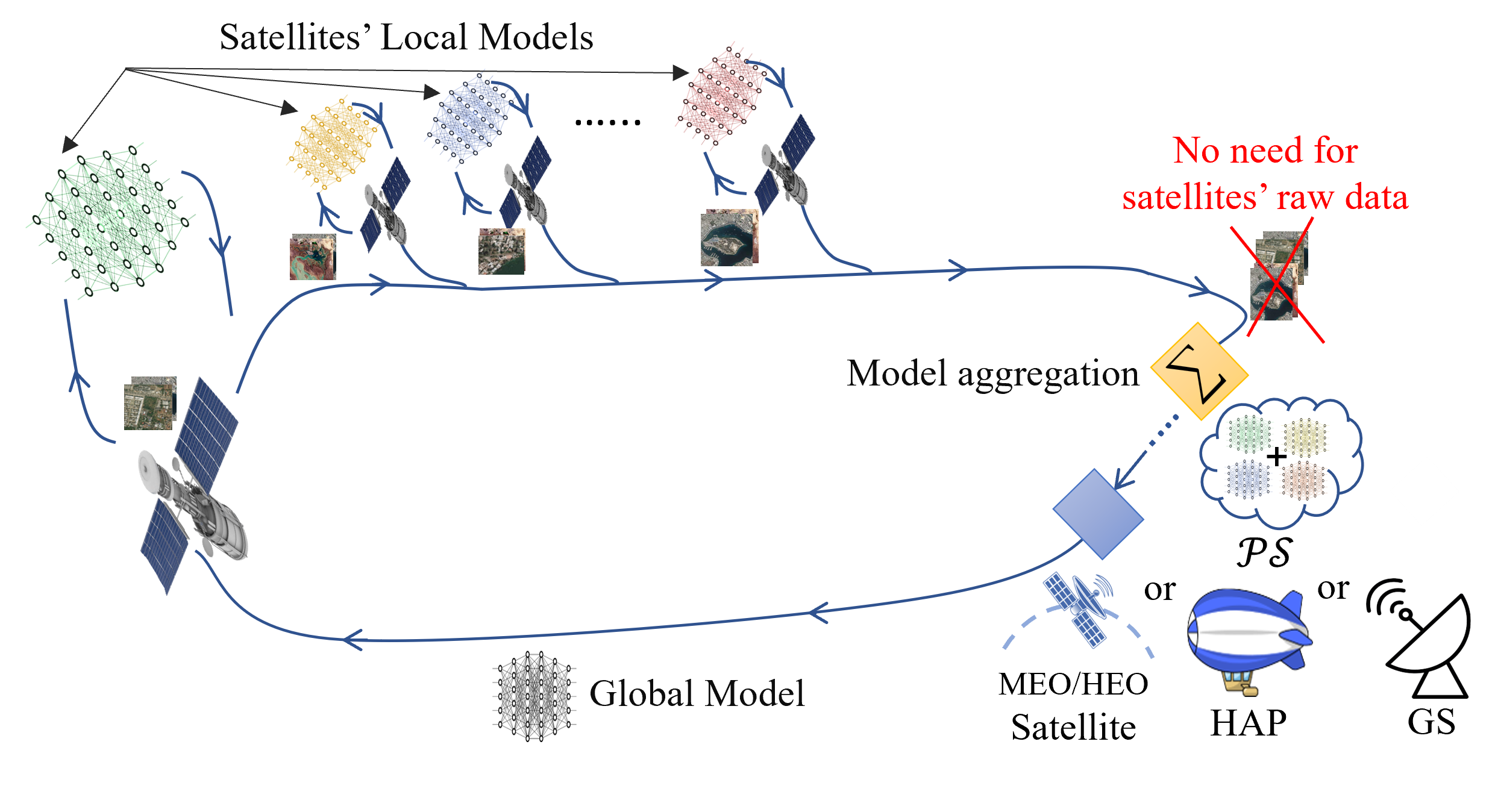}
\caption{Illustration of an FL-LEO system.}
\label{FL_LEO_Graph}\vspace{-0.2cm}
\end{figure}
One of the major challenges with this learning process is that all communications (uplink/downlink) can only occur when a satellite is transiently visible to the $\mathcal {PS}$, which significantly limits the communication opportunities and prolongs the entire process to several days or even weeks. This makes the learning speed unable to keep up with the rate at which data is collected by LEO satellites, and as a result, the global FL model is always outdated. Moreover, another challenge to FL-LEO convergence is that during the visible window of the $\mathcal {PS}$ for an LEO satellite, the limited bandwidth can be insufficient for a large satellite model to be fully uploaded to the $\mathcal {PS}$ and hence the satellite has to wait for the next longer visible window to start over, resulting in large delays.\vspace{-0.5cm}

\subsection{FL-LEO Communication Analysis}\label{Com_link}

In this paper, we consider a set of HAPs as the $\mathcal {PS}$ that coordinates the model aggregation process in the FL-LEO system. Because of this, we incorporate two types of communication links, satellite-HAP link (SHL) and inter-HAP link (IHL), in addition to the satellite-GS link and inter-satellite link (ISL) which exist in prior studies. We apply the Shadowed Rician channel fading model to analyze these links since there will be direct LoS and multi-path links during the visible periods, and the transmission between satellites can be affected by various factors including atmospheric conditions, rains, and obstacles or debris in space. Note that HAPs and LEO satellites can use free-space optical (FSO) links instead of radio frequency (RF) links to communicate at much higher data rates. However, we do not take this advantage in our simulation, so we can have a consistent setup with current state-of-the-art research and a fair comparison.

Our analysis of SHL takes into account four factors: i) free-space path loss, ii) antenna gain of the transmitter and receiver, iii) antenna pointing error and shadowing, and iv) fading of the channel. Consequently, the total link budget of SHL between a satellite $k$ and a HAP $\mathpzc{h}$, without small-scale fading, can be expressed as \cite{gamal2022performance}
\begin{equation} \label{eqn_6}
    SHL{(k,\mathpzc{h})} = \frac{G_{\mathpzc{h}}~G_{k}(\theta)}{{L}_{k,\mathpzc{h}}~L_p}, 
\end{equation}
where $G_{\mathpzc{h}}$ is the antenna gain of a HAP $\mathpzc{h}$, $G_{k}(\theta)$ is the beam gain of a satellite $k$,which given by \cite{yan2019ergodic}
\begin{equation}
    G_{k}(\theta)=G_{k} \Bigg(\frac{J_1(k_s)}{2k_s}+36\frac{J_3(k_s)}{(k_s)^3}\Bigg)^{2}
\end{equation}
where $G_k$ is the antenna gain of $k$, $J(\cdot)$ is the Bessel function, and $k_s$ is a constant denoting the distance between the beam of a satellite $k$ to a HAP $\mathpzc{h}$ and its entire coverage radius. ${L}_{k,\mathpzc{h}}$ is the free-space pass loss between a satellite $k$ and a HAP $\mathpzc{h}$. As long as a LoS link between them is established (i.e., not blocked by the Earth), ${L}_{k,\mathpzc{h}}$ can be given by \cite{happaper} 
\begin{equation}
    {L}_{k,\mathpzc{h}} = \bigg(\frac{4\pi \|k,\mathpzc{h}\|_{2}  f_c}{c}\bigg)^{2}
\end{equation}
where $\|k,\mathpzc{h}\|_{2}$ is the Euclidean distance between satellite  $k$ and HAP $\mathpzc{h}$, $f_c$ denotes the carrier frequency, and $c$ is the speed of light. In \eqref{eqn_6}, $L_p$ is the antenna pointing error loss, which can be expressed as \cite{gamal2022performance}
\begin{equation}
    L_p=2.7211\times 10^{-20}~{f_c}^2~{\theta_e}^2~{D_c}^2
\end{equation}
where ${\theta_e}$ denotes the angle of the pointing error, and ${D_c}$ is the diameter of the aperture antenna.

When using traditional communication schemes, such as orthogonal multiple access (OMA) systems, the total time $t_c$ required to exchange the local model generated by a satellite $w_k$ with the global model $\boldsymbol{w}$ generated by a HAP $\mathpzc{h}$ can be calculated as:
\begin{align}
    t_{c} &= t_{t}+t_{p}+t_{k}+ t_{\mathpzc{h}}, \label{eqnx}  \\t_{t} &= \frac{q{|\mathcal{D}|}}{R} , \quad t_{p} = \frac{\|k,\mathpzc{h}\|_{2}}{c}, \label{eqny}
\end{align}
where $t_t$ and $t_p$ are the transmission and propagation times, respectively, $t_{k}$ and $t_{\mathpzc{h}}$ are the processing delay at a satellite $k$ and a HAP $\mathpzc{h}$, respectively (we omit them in our simulation since they are much smaller than $t_t$ and $t_p$), ${|\mathcal{D}|}$ is the data samples of the dataset $\mathcal{D}$, $q$ is the number of bits in each sample, and $R$ is the maximum achievable data rate.

In OMA systems, since $R$ is limited by the bandwidth allocated to each satellite, the time required to transmit a satellite's model to a $\mathcal {PS}$ can be longer than the satellites' visibility period and thus will fail. In the next section, we show that by using NOMA in FL-LEO, we can essentially increase the value of $R$ and thereby allow for exchanging large satellite model parameters within a short visible window.

\section{NomaFedHAP Communication Framework} \label{sec:Comm-Framework}

NomaFedHAP is a synchronous FL framework proposed to address the slow convergence of FL-LEO training due to the short and irregular visibility of LEO satellites. Fig.~\ref{pattern} shows an example of a visibility pattern for an LEO satellite constellation, which indicates that each satellite's visible window is only a few minutes and the invisible periods (i.e., gaps in between) are much longer and highly irregular. 
\begin{figure}[t]
     \centering
    {{\includegraphics[width=1\linewidth]{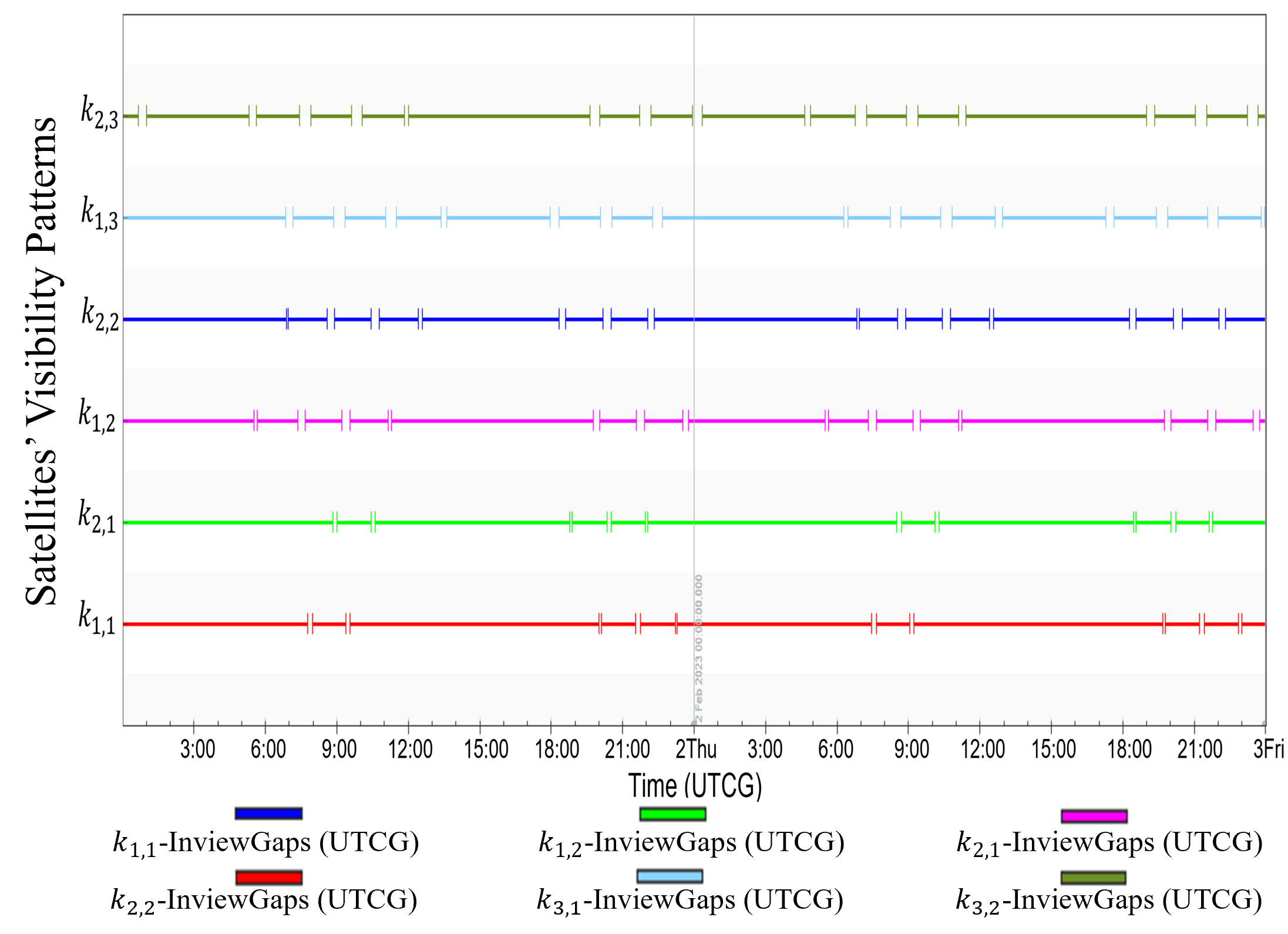}}}
    \caption{A simulated visibility pattern over two days of six LEO satellites located in three different shells, each containing two satellites positioned at an altitude of 500km, 1250km, and 2000km, respectively. The $\mathcal {PS}$ is a GS located in Rolla, Missouri, USA. In the figure, $k_{i,j}$ represents the \textit{i-th} satellite in the \textit{j-th} orbit.}
    \label{pattern}\vspace{-0.2cm}
\end{figure}

The cause of this problem is that LEO satellites fly very fast, typically taking only 90-120 minutes to orbit the Earth, while the Earth takes 24 hours to rotate one cycle. More importantly, satellites and the Earth are moving along distinct trajectories. As a result, each LEO satellite meets up with the $\mathcal {PS}$ in a very infrequent and transient manner, and eventually, this impedes the convergence of FL tremendously.

NomaFedHAP introduces the NOMA communication scheme to FL-LEO to address this issue. It enables satellites to fully utilize the entire bandwidth of downlink and exchange their ML models with the $\mathcal {PS}$ at high data rates and low bit error rates (BER), thereby drastically reducing the model transmission time to only a few seconds which is shorter than any visible window. In consequence, the issue of having straggler satellites is no longer eminent in NomaFedHAP despite its synchronous nature.

On top of that, NomaFedHAP introduces other techniques (OFDM, HAPs, model propagation, unbiased model aggregation) to expedite FL convergence further, which we describe in later sections.

\subsection{NomaFedHAP Communication Architecture}

To address the high propagation and transmission delays between LEO satellites and the traditional $\mathcal {PS}$ such as GS, LEO satellite, or MEO satellite, NomaFedHAP utilizes multiple HAPs following our work in \cite{happaper} as $\mathcal {PS}$ to propagate the local and global models between LEO satellites and HAPs. HAPs also serve as relays among orbits, mitigating the Doppler shift when the $\mathcal{PS}$ is an LEO or MEO satellite \cite{razmi, wu2022dsfl}. In the stratosphere at an altitude 17-22 km above the Earth's surface \cite{alsharoa2020improvement}, HAPs, such as unmanned airships, aircraft, or balloons, serve as quasi-stationary aerial stations that improve the network connectivity and throughput \cite{jia2020joint}. 
In general, HAPs can offer the following advantages over traditional $\mathcal {PS}$ (e.g., GS):
\begin{itemize} 
    \item {\bf Enhanced visibility:} Due to HAP's elevated altitude, it can ``see'' more satellites at once or see each satellite more frequently than GS (GS has an angular view of $180^o -  \Theta$, while a HAP can view even beyond $180^o$).
    
     \item {\bf Improved communication environment:} HAPs operate in the stratosphere which offers a clearer, stabler, and less interfered environment than the troposphere. Moreover, HAPs and LEO satellites can use FSO rather than RF links and thereby achieve a much higher data rate and lower latency (1-2 ms) \cite{xing2021high,hsieh2020uav}. Note, however, that we do {\em not} use FSO in our experiments, for a fair comparison with other approaches.
    
    \item  {\bf Lower-cost and flexible deployment:} A GS can cost millions of dollars while a HAP costs much lower \cite{hap,kurt2021vision}. Also, a GS is difficult to relocate, while a HAP can move easily for provisioning on-demand services or adapting to LEO changes.

    \item {\bf Better energy management:} HAPs can be powered by solar panels more effectively due to the higher altitude, and even completely self-powered with careful trajectory optimization \cite{marriott2020trajectory}.
\end{itemize}

Traditional FL communication follows a star topology where the $\mathcal {PS}$ sits in the center. In our work, due to the introduction of collaborative HAPs as $\mathcal {PS}$ and relays between orbits, we design a two-layer communication architecture. The first layer is a HAP layer or \textit{server layer}, which is composed of all the HAPs that aggregate and transmit global models. 
The second layer is a satellite layer or \textit{worker layer}, which is composed of all LEO satellites that train and transmit local models. We use intra-orbit ISL only and no inter-orbit ISL for communication among satellites,\footnote{A satellite has four antennas, two on the \textit{roll axis} for intra-orbit ISL communication and two on the \textit{pitch axis} for inter-orbit ISL communication.} to avoid any considerable Doppler shift. Therefore, the HAPs will serve as relays that bridge different orbits.

The server layer is organized in a {\em ring structure} for inter-HAP communication. In addition, each HAP communicates with a collection of visible satellites from different orbits as in a star topology. Therefore, the eventual communication architecture becomes a \textit{ring of small stars}.

Such a parallel connectivity pattern among the rings can significantly enhance communication. Even when there is only a single HAP, our local model propagation algorithm (\sref{Model_propagation}) allows satellites to leverage current or soon-to-be visible satellite to exchange models with $\mathcal {PS}$ without waiting for their own visible windows, thus reaping substantial performance gains.

\subsection{Introducing NOMA to FL-LEO}\label{sec:noma}

Fig.~\ref{Noma_sys} gives an overview. For illustration purposes, it only shows one orbit per shell and a single HAP. However, NomaFedHAP supports multiple orbits per shell and multiple HAPs. 

NomaFedHAP introduces a hybrid NOMA-OFDM communication scheme to LEO satellites. Specifically, visible satellites on different shells communicate with HAPs using PD-NOMA, while satellites on the same shell (i.e., at the same distance from the HAPs) communicate with HAPs and other satellites in the same orbit using OFDM (the intra-orbit communication is for the purpose of model propagation which is described in \sref{Model_propagation}).



\begin{figure*}[!t]
     \centering
     \includegraphics[width=0.9\linewidth]{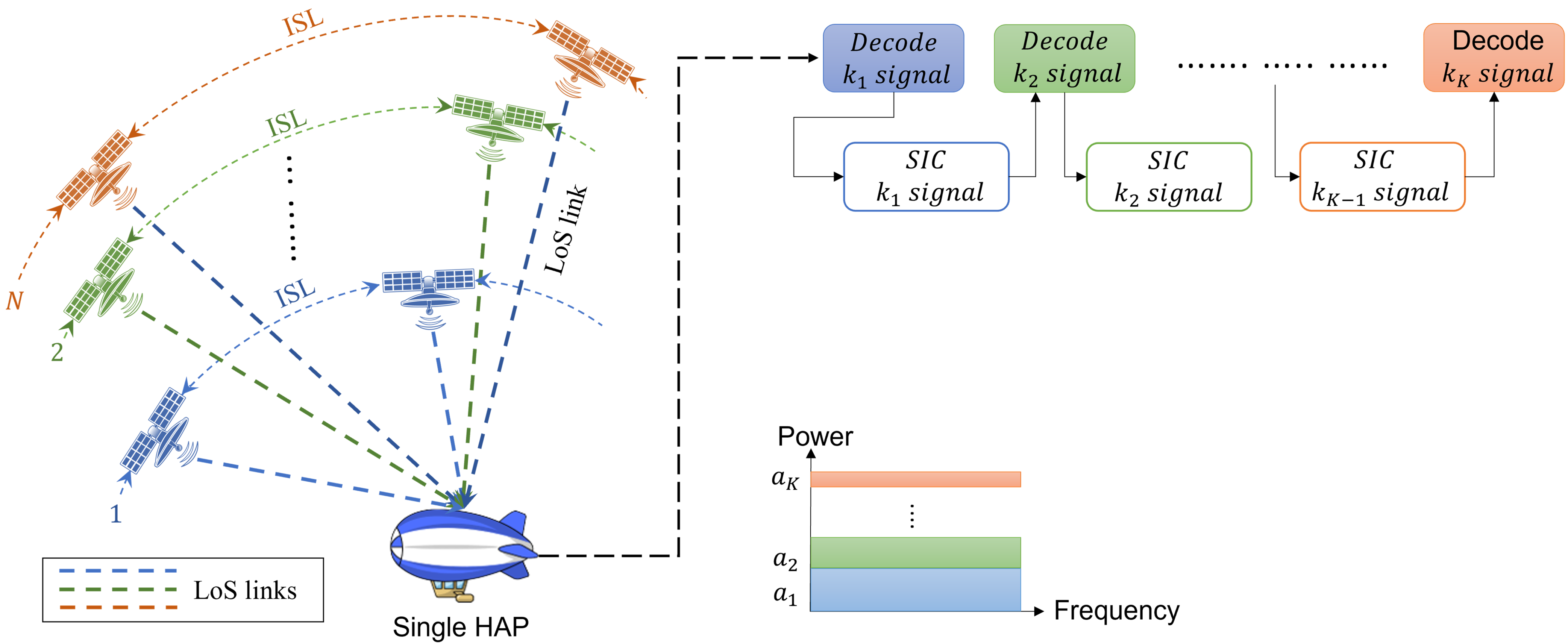}
    \caption{NomaFedHAP with orbital shells 1,2,...$N$ and a single HAP as $\mathcal {PS}$ for illustration. Only visible satellites are drawn so all the shown links are LoS links.}
    \label{Noma_sys}\vspace{-0.2cm}
\end{figure*}

Over the downlink, all the visible satellites from different shells transmit signals (i.e., their ML models) to their respective visible HAPs within the whole bandwidth using NOMA. Each HAP $\mathpzc{h}\in \mathcal H$ will thus receive a {\em combined} signal $y$ from all the satellites $\mathcal K'$ in $\mathpzc{h}$'s visible zone, which can be expressed as, due to different power allocation coefficients,
\begin{equation}
    y=n+\sum_{k\in \mathcal K^{'}} \lambda_{k} \sqrt{a_k~P_s}~x_k
    \label{receiver_eqn}
\end{equation}
where $\lambda_{k}$ is the channel coefficient of \textit{k-th} satellite, 
$a_k$ is a fractional power coefficient, $P_s$ is the maximum downlink transmission power of each visible satellite, and $x_k$ is the \textit{k-th} satellite's signal with unit energy. The other term, $n\sim \mathcal{CN}(0,\sigma^2)$, is complex additive white Gaussian noise (AWGN) with variance $\sigma^2=K_BTB$, where $K_{B} =1.38\times 10^{-23} J/K$ is the Boltzmann constant, \textit{T} is the noise temperature, and $B$ is the bandwidth.


The power coefficient $a_k$ is adjusted by each satellite $k$ based on its channel condition, and satisfies $\sum_{k\in \mathcal{K}^{'}} {a_k} \leq 1$ to limit the interference from other satellites. Note that $a_k$ is inversely related to the satellite's channel condition, which means that a satellite with a poor channel will select a higher transmission power coefficient. 

Next, each HAP starts to decode the combined signal using {\em successive interference cancellation} (SIC) iteratively. The satellites are ordered according to their channel gain (strongest first), as 
\begin{equation}
|\lambda_1|^2 \geq |\lambda_2|^2 \geq ... \geq |\lambda_{K^{'}}|^2
\end{equation}
The HAP will then decode the signal of the strongest satellite first, i.e., $x_1$, by treating the signal from other satellites as interference. Next, it re-modulates and subtracts the decoded signal $x_1$ from the received signal $y$. After that, it performs SIC for the next strongest satellite (i.e., $x_2$) and so forth until the last satellite's signal $x_{k}$ is decoded. See Fig.~\ref{Noma_sys} for an illustration.

In our analysis below, we focus on the downlink NOMA, as the uplink in our system can use any standard satellite communication scheme since it only involves the $\mathcal{PS}$ broadcasting the same global model to all the satellites, and thus each satellite independently decodes a single signal.

{\bf 1) Signal-to-Interference Noise Ratio Analysis.} 
Once all the visible satellites' signals are decoded at the HAP, the signal-to-interference noise ratio (SINR) of the first satellite (i.e., strongest signal) can be computed by \cite{aldababsa2018tutorial}:\vspace{-0.1cm}
\begin{equation} \label{eqn_15}
    SINR_{1}= a_1 \rho |\lambda_1|^2
\end{equation}
where $\rho=P_s/\sigma^2$ is the signal-to-noise ratio (SNR). 
For the the remaining visible satellites $k\in\mathcal K^{'}$, $k \neq 1$, the SINR can be calculated by
\begin{equation}\label{eqn_14}
    SINR_{k}=\frac{a_k \rho |\lambda_{k}|^2}{\rho \sum_{i=1}^{k-1} |\lambda_i|^2 a_i + 1}
\end{equation}

{\bf 2) Data Rate Analysis.}
Once the SINR of downlink NOMA is determined, we can obtain the maximum achievable rate at a HAP $\mathpzc{h}$. Assuming that the symbols $x_1, x_2,\dots, x_{k-1}$ have been decoded correctly, the maximum data rate at $\mathpzc{h}$ to decode satellite $k$'s symbol $x_k$ is given by
\begin{equation}
    R_{k\rightarrow \mathpzc{h}}={log}_ {_2}\biggl(1+ \underbrace{\frac{a_k \rho |\lambda_{k}|^2}{\rho \sum_{i=1}^{k-1} |\lambda_i|^2a_i+1}}_{SINR_{k}}\biggl).
\end{equation}
Note that $R_{k\rightarrow \mathpzc{h}}$ is normalized per unit bandwidth. Consequently, the maximum total rate $R_{total}$ for a HAP to correctly decode all its visible satellites' symbols is 
\begin{align}
      R_{total} =& \sum_{k\in \mathcal K^{'}} {log}_ {_2} (1+SINR_k)\notag\\
      =&\,\Resize{7.8cm}{ {log}_ {_2}\Big(1+ a_1 \rho |\lambda_1|^2\Big) + \sum_{k\in \mathcal K^{'},k\neq 1} {log}_ {_2} \biggl(1+ \frac{a_k \rho |\lambda_{k}|^2}{\rho \sum_{i=1}^{k-1} |\lambda_i|^2 a_i + 1}\biggl)}\notag\\
      =&\, {{log}_ {_2}\Biggl(\biggl(1+ a_1 \rho |\lambda_1|^2\biggl) \times \biggl(1+ \frac{a_2 \rho |\lambda_{2}|^2}{a_1\rho |\lambda_1|^2  + 1}\biggl)}\notag\\
      &\ \times \biggl(1+ \frac{a_3 \rho |\lambda_{3}|^2}{a_1\rho |\lambda_1|^2 +a_2\rho |\lambda_2|^2 + 1}\biggl) \times \dots\Biggl)\notag\\
      =&\, {log}_ {_2}\Big(1+ \rho \sum_{k\in \mathcal K^{'}} |\lambda_{k}|^2 a_k\Big).
\end{align}
When $\rho \gg 1$, $R_{total}$ can be approximated as\vspace{-0.1cm}
\begin{align}
      R_{total}\approx log_ {_2} \Big(\rho \sum_{k\in\mathcal K^{'}}|\lambda_{k}|^2a_k\Big).
\end{align}

{\bf 3) Channel Model Statistics.}
The downlink between each visible satellite $k$ and its connected HAP can be modeled by a shadowed-Rician fading channel, whose probability density function (PDF) of $|\lambda_{k}|^2$ is given by \cite{bankey2021physical} 
\begin{equation}\label{pdf}
    f_{|\lambda_{k}|^2}(x)=\mu_k e^{(-\beta_k x)}{_1 F_1}(m_k;1;\delta_k x)
\end{equation}
where $\mu_k=\frac{1}{2b_k}\big(\frac{2b_k  m_k}{2b_k  m_k+\Omega_k})^{m_k}$, $\beta_k=\frac{1}{2b_k}$, $\delta_k=\frac{\Omega_k}{2b_k(2b_k m_k+\Omega_k )}$, ${_1F_1(.;.;.)}$ is a {\em confluent hypergeometric function of the first type} \cite{VANASSCHE}, 2$b_k$ is the multi-path component, $m_k$ is the integer-valued fading severity parameter of the channel, and $\Omega_k$ is the average power of the LoS link. Using $m_k$, the hypergeometric function $_1F_1$ can be expressed as\vspace{-0.1cm}
\begin{align}
_1F_1(m_k;1;\delta_k x) &= e^{\delta_k x} \sum_{i=0}^{m_k-1} \underbrace{\frac{{(-1)}^i {(1-m_k)}_i {(\delta_k x)}^i}{{(i!)}^2}}_{\kappa(i)}
\nonumber\\&= e^{\delta_k x} \sum_{i=0}^{m_k-1} \kappa(i),
\end{align}
where $(\cdot)_i$ is the pochhammer symbol. With the aid of [\citenum{zwillinger2007table}, Eq. (3.351.2)], we obtain the  cumulative distribution function (CDF) for $f_{|\lambda_{k}|^2}(x)$ as in \eqref{pdf} as\vspace{-0.1cm}
\begin{align}\label{eqn_Fk}
F_{|\lambda_{k}|^2}(x) &= 1-\mu_k  e^{-(\beta_k-\delta_k)x} \sum_{i=0}^{m_k-1} \kappa(i)  \sum_{j=0}^i \frac{i!}{j!} x^j  \times \nonumber\\& (\beta_k-\delta_k )^{-(i-j+1)}.
\end{align}
It is assumed that the links between each HAP $\mathpzc{h}$ and the GS (for transmitting the final global model after the training completes) follow Nakagami-m fading, whose PDF can be expressed by \cite{gamal2022performance}\vspace{-0.1cm}
\begin{equation}
    f_{|\lambda_\mathpzc{h}|^2}(x)=\Big(\frac{m_\mathpzc{h}}{\Omega_\mathpzc{h}}\Big)^{m_\mathpzc{h}}~\frac{x^{m_\mathpzc{h}-1}}{\Gamma(m_\mathpzc{h})}~e^{-\frac{m_\mathpzc{h}}{\Omega_\mathpzc{h}}x}
\end{equation}
where $m_\mathpzc{h}$ and $\Omega_\mathpzc{h}$ are the severity parameter and the average power of LoS, respectively, for a HAP $\mathpzc h$. Hence, the CDF for $f_{|\lambda_\mathpzc{h}|^2}(x)$ can be given by\vspace{-0.2cm}
\begin{equation}\label{eqn_Fi}
  F_{|\lambda_\mathpzc{h}|^2} (x)=1-e^{-\frac{m_\mathpzc{h}}{\Omega_\mathpzc{h}}x} \sum_{n=0}^{m_\mathpzc{h}-1}\big(\frac{m_\mathpzc{h}}{\Omega_\mathpzc{h}}x\Big)  \frac{1}{n!}. 
\end{equation}
{\bf 4) Outage Probability Analysis.} Here we analyze the NOMA downlink \textit{reliability} from the perspective of system outage probability (OP). In the context of LEO satellites, OP refers to the probability that the received power at a HAP $\mathpzc{h}$ falls below a threshold such that the SINR is too low for the HAP to decode the correct signal $x_k$. Mathematically, 
\begin{equation}\label{OP_Sat}
    { OP_{\mathpzc{h}}=1-Pr\bigl( Q_1 \cap  Q_2 \cap\dots \cap Q_k \bigl)}
\end{equation}
where $Q_j, j=1,...,k$, denotes the event that a satellite signal $x_j$ is correctly decoded by HAP $\mathpzc{h}$. The OP can be rewritten as
\begin{align}
OP_{\mathpzc{h}}^{k}&=Pr \Big(SINR_{k\rightarrow \mathpzc{h}} < \gamma_{th}^k \Big) 
=\Resize{3.8cm}{Pr\bigg(\frac{a_k \rho  |{ {\lambda_{k}}}|^2}{\rho \sum_{i=1}^{k-1} | {\lambda_i}|^2a_i+1} < \gamma_{th}^k \bigg)} \notag \\
        &=Pr\Biggl(|{\lambda_{k}}|^2 < \frac{\gamma_{th}^k\bigl(\rho \sum_{i=1}^{k-1} | {\lambda_i}|^2a_i+1\bigl)}{a_k \rho }\Biggl) \notag \\
         &=Pr\Big(| {\lambda_{k}}|^2 < \eta^*_k\Big) = F_{| {\lambda_{k}}|^2 }(\eta^*_k) \notag \\ 
        &=1-\mu_k  e^{-(\beta_k-\delta_k)\eta^*_k} \sum_{i=0}^{m_k-1} \kappa(i)\sum_{j=0}^i \frac{i!}{j!} (\eta^*_k)^j\nonumber\\& \times(\beta_k-\delta_k )^{-(i-j+1)}   
\end{align}
where $\gamma_{th}^k$ is the SINR threshold of \textit{k-th} satellite to be correctly decoded and $\eta^*_k=\max\{\eta_1,\eta_2,\dots,\eta_k\}$ with $\eta_j=\frac{\gamma_{th}^j\bigl(\rho \sum_{i=1}^{j-1} | {\lambda_i}|^2a_i+1\bigl)}{a_j\rho}$. 

Below, we derive a closed-form OP expression for the nearest satellite (NS) and that for the farthest satellite (FS), which represent the strongest and the weakest signals, respectively, with respect to any HAP $\mathpzc h$. We also derive the closed-form OP for the entire LEO system subsequently.
\paragraph{\bf Derivation of OP for the NS} The outage for the NS scenario occurs when the NS' transmitted signal $x_{NS}$ cannot be successfully decoded by its connected HAP $\mathpzc{h}$, which can be expressed as\vspace{-0.09cm}
\begin{equation}\label{OP_n}
OP_{\mathpzc {h}}^{NS}= Pr\bigl(\gamma_{x}^{NS} < \gamma_{th}^{NS} \bigl)
= 1 - Pr\bigl( \gamma_{x}^{NS} \geq \gamma_{th}^{NS} \bigl)
\end{equation}
where $\gamma_{th}^{NS}=2^{2R_{NS}}-1$ is the target SINR threshold for the NS to be correctly decoded by $\mathpzc {h}$, and $R_{NS}$ is the target data rate for correctly receiving the NS's signal by $\mathpzc {h}$. The SINR $\gamma_{x}^{NS}$ can be written as\vspace{-0.1cm}
\begin{equation}\label{OP_N}
     \gamma_{x}^{NS}={{a_{NS}\rho}~|\lambda_{NS}|^2}
\end{equation}
Using \eqref{OP_N}, we can write $OP_{\mathpzc {h}}^{NS}$ as \vspace{-0.3cm}
\begin{align}
OP_{\mathpzc {h}}^{NS} &=1-Pr\biggl( |\lambda_{NS}|^2 \geq \frac{\gamma_{th}^{NS}~\omega_1}{a_{NS}} \biggl)\nonumber\\&
= 1 - \Big( 1-F_{|\lambda_{NS}|^2}\bigl(A~\omega_1\bigl)\Big)=\Resize{2.15cm}{F_{|\lambda_{NS}|^2}\bigl(A~\omega_1\bigl) }
\end{align}
where $A=\frac{\gamma_{th}^{NS}}{a_{NS}}$ and $\omega_1=\frac{1}{\rho}$. With the aid of \eqref{eqn_Fk}, we obtain the closed-form expression for $OP_{\mathpzc{h}}^{NS}$ as
\begin{align}\label{OP_NS}
        OP_{\mathpzc {h}}^{NS} &=1-\mu_k  e^{-(\beta_k-\delta_k)A\omega_1} \sum_{i=0}^{m_k-1} \kappa(i)\sum_{j=0}^i \frac{i!}{j!} (A \omega_1)^j \nonumber\\&  \times(\beta_k-\delta_k )^{-(i-j+1)} 
\end{align}
\paragraph{\bf Derivation of the outage for the FS} The OP for the FS scenario occurs under two conditions: i) its connected HAP $\mathpzc{h}$ fails to decode both the NS signal $x_{NS}$ and its transmitted signal $F_{NS}$, and ii) $\mathpzc{h}$ can decode the NS signal $x_{NS}$ but cannot decode its signal $x_{FS}$. Mathematically, that is 
\begin{align}\label{OP_F}
   OP_{\mathpzc {h}}^{FS}=&Pr\bigl(\gamma_{x}^{NS}<\gamma_{th}^{NS} \bigl)  Pr\bigl(\gamma_{x}^{FS}<\gamma_{th}^{FS} \bigl) +  \nonumber\\& Pr\bigl(\gamma_{x}^{NS}\geq\gamma_{th}^{NS}\bigl)Pr\bigl(\gamma_{x}^{FS}<\gamma_{th}^{FS} \bigl)\nonumber\\
        =&1-Pr\bigl(\gamma_{x}^{FS}\geq\gamma_{th}^{FS}\bigl) 
\end{align}
Similar to NS,  $\gamma_{th}^{FS}=2^{2R_{FS}}-1$  denotes the target SINR threshold of the FS to be correctly decoded by $\mathpzc {h}$, and $R_{FS}$ is the target data rate for correctly receiving the FS's signal by $\mathpzc {h}$. But here  $\gamma_{x}^{FS}$ is given by
\begin{equation}\label{OP_N_F}
\begin{aligned} 
    &\gamma_{x}^{FS}=\frac{a_{FS}\rho|\lambda_{FS}|^2}{\rho \sum_{i=1}^{FS-1}|\lambda_{i}|^2 a_{i}+1}
\end{aligned}
\end{equation}
Similar to the derivation of $OP_{\mathpzc {h}}^{NS}$, we can derive the closed-form expression for the OP of the FS scenario by substituting from \eqref{eqn_Fk} as follows:\vspace{-0.2cm}
\begin{align}\label{OP_FS}
        OP_{\mathpzc {h}}^{FS}&=1-\mu_k  e^{-(\beta_k-\delta_k)E\omega_2} \sum_{i=0}^{m_k-1} \kappa(i)\sum_{j=0}^i \frac{i!}{j!} (E \omega_2)^j \nonumber\\& \times(\beta_k-\delta_k )^{-(i-j+1)}
\end{align}
where $E=\frac{\gamma_{th}^{FS}}{a_{FS}}$ and $\omega_2=\frac{\rho \sum_{i=1}^{FS-1}|\lambda_{i}|^2 a_{i}+1}{\rho}$.\vspace{1mm}

\paragraph{\bf Derivation of OP for the entire LEO system} By combining the outage experience at $\mathpzc {h}$ for both NS and FS scenarios, the overall system OP can be expressed as\footnote{Note that our derivation has accounted for the case when there are extra satellites between NS and FS, which can be seen from \eqref{OP_N_F}.}
\begin{align}\label{OP_sys}
     OP_{sys}=&1-Pr\bigl(\gamma_{x}^{NS}\geq\gamma_{th}^{NS} \bigl) Pr\bigl(\gamma_{x}^{FS}\geq\gamma_{th}^{FS}\bigl)\nonumber
        \\=&1-\Big( 1-F_{|\lambda_{NS}|^2}\bigl(A~\omega_1\bigl)\Big)\times \Big( 1-F_{|\lambda_{FS}|^2}\bigl(E~\omega_2\bigl)\Big)\nonumber
        \\ =&1-\Big(  \mu_k  e^{-(\beta_k-\delta_k)A\omega_1} \sum_{i=0}^{m_k-1} \kappa(i)\sum_{j=0}^i \frac{i!}{j!} (A \omega_1)^j \nonumber
         \\&   \times(\beta_k-\delta_k )^{-(i-j+1)}  \Big)  \times\Bigl(\mu_k  e^{-(\beta_k-\delta_k)E\omega_2}\sum_{i=0}^{m_k-1} \kappa(i) \nonumber
         \\& {\sum_{j=0}^i \frac{i!}{j!} (E \omega_2)^j  \times(\beta_k-\delta_k )^{-(i-j+1)} \Bigl)}
\end{align}

To the best of our knowledge, the OP of NOMA  for LEO satellites as ``clients'' has never been derived in the literature. Additionally, we also demonstrate via simulations that our NomaFedHAP scheme achieves higher data rates and experiences less outages even when a large number of satellites communicate simultaneously with the $\mathcal{PS}$.  
\begin{figure*}[t]
     \centering
         \subfloat[\centering Global model propagation within the HAP layer.]{\centering 
         \includegraphics[width=0.32\textwidth]{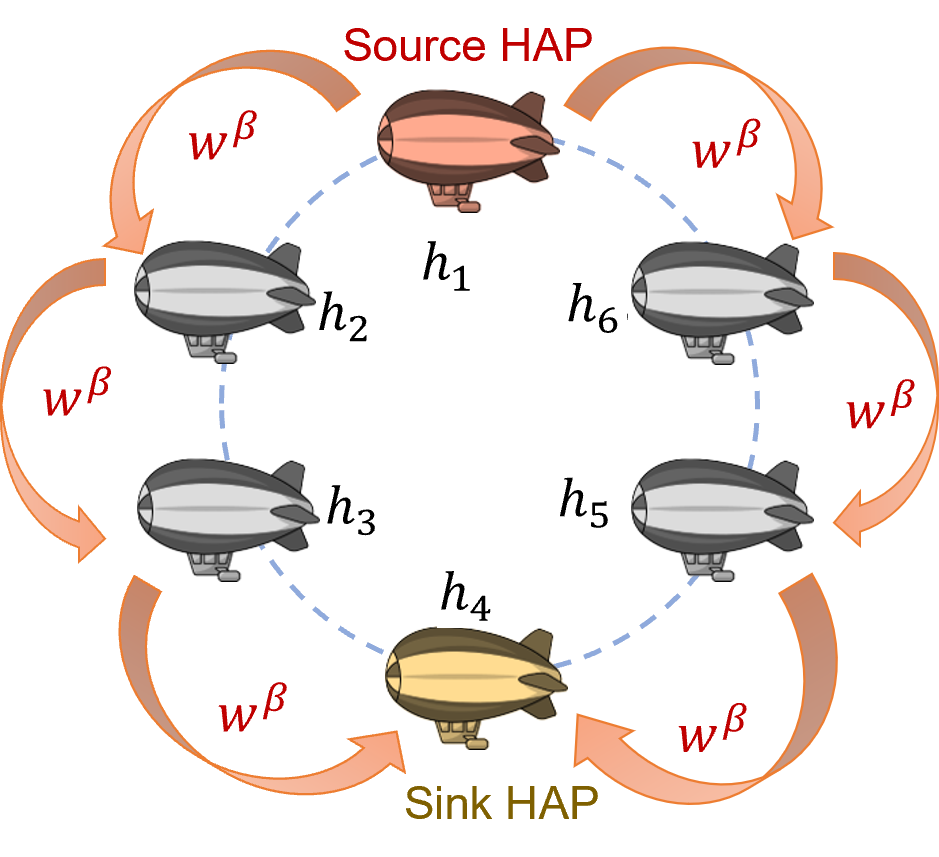}\label{Picture2}}
     \hfill
     \subfloat[\centering Local and sub-orbital models propagation within the Satellite Layer.]{\centering 
         \includegraphics[width=0.32\textwidth]{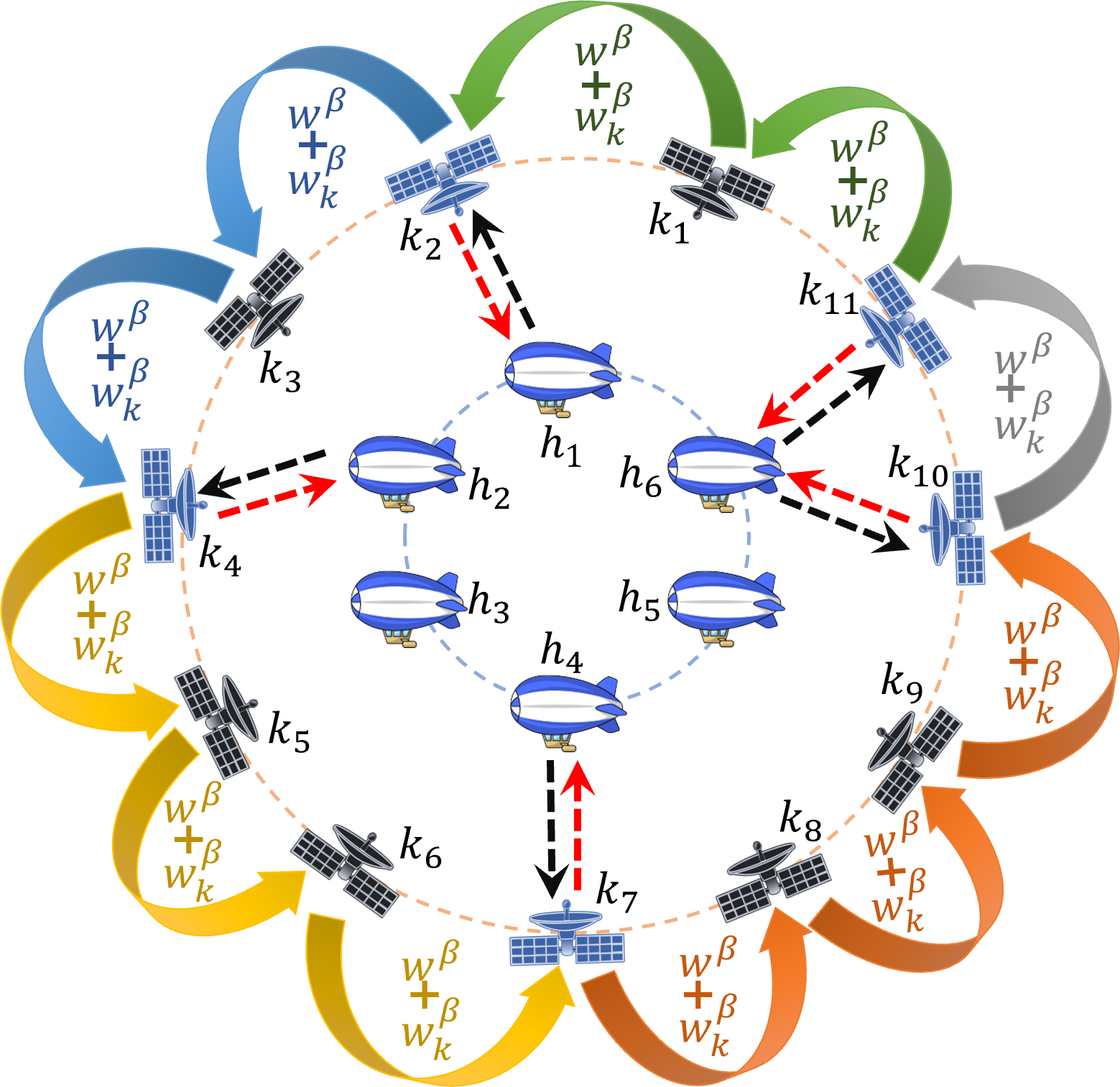}\label{Picture4}}  
     \hfill
     \subfloat[\centering Sub-orbital model propagation within the HAP layer.]{\centering 
         \includegraphics[width=0.32\textwidth]{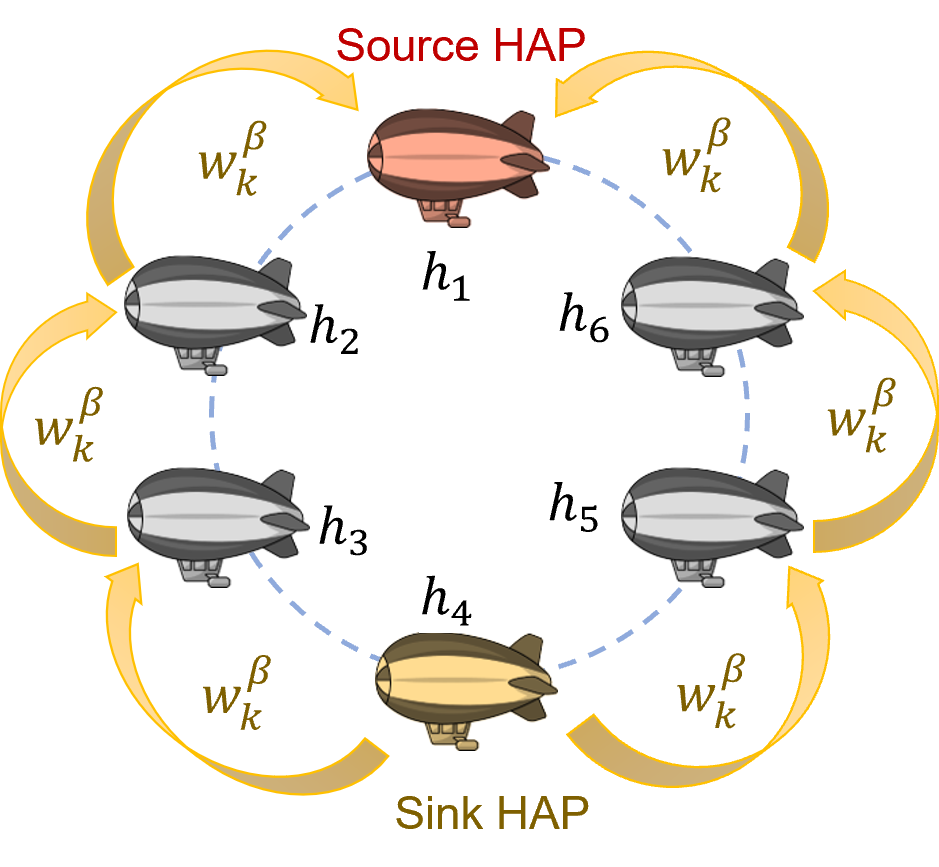}\label{Picture3} }        
\hfill
        \caption{Illustration of the proposed model propagation algorithm. (a) The source HAP $\mathpzc{h}_1$ forwards the global model $\boldsymbol{w}^\beta$ to the sink HAP $\mathpzc{h}_4$; (b) Visible satellites are represented by blue, and invisible satellites by black. Colored curved arrows show the propagation of models (global model and sub-orbital model) from $k_2 \rightarrow k_4$, $k_4 \rightarrow k_7$, $k_7 \rightarrow k_{10}$, $k_{10} \rightarrow k_{11}$, $k_{11} \rightarrow k_{2}$. (c) The sink HAP $\mathpzc{h}_4$ propagates the sub-orbital model $w_k{^\beta}$ to the source HAP $\mathpzc{h}_1$ along the reverse pathway.}\label{fig:propagate}
\end{figure*}

\section{NomaFedHAP Convergence Framework}\label{sec:Model_aggregation}

In NomaFedHAP, FL convergence is achieved through two key components, a {\em model propagation} algorithm, and a {\em model aggregation} algorithm. The model propagation algorithm is proposed to minimize the ``idleness'' in traditional synchronous FL-LEO approaches, where ``straggler'' satellites cause the $\mathcal{PS}$ to idly wait for long periods for model exchange. Note that unlike existing works which resort to asynchronous FL and thereby face the {\em stale model} problem, our solution keeps the synchronous nature (and hence benefits from all instead of a subset of models) yet still accelerates the process substantially, by enabling intra-orbit client communications.
The second component, the model aggregation algorithm, runs on the HAP layer and aggregates the sub-orbital models received from each HAP. This model aggregation algorithm differs from traditional FL, which only aggregates {\em individual} client models.
\subsection{NomaFedHAP Model Propagation Algorithm}\label{Model_propagation}

This algorithm consists of propagation of global, local, and sub-orbital models, as illustrated in \fref{fig:propagate} and explained below.

{\bf Global Model Propagation within HAP Layer} (\fref{Picture2}). 
When there are multiple HAPs in the HAP layer, one HAP will be designated as the {\em source} while the other (the farthest one from the source) as the {\em sink}. To begin with, the source HAP generates the initial global model $\boldsymbol{w}^{0}$ and sends it to its adjacent HAPs via  IHL. Simultaneously, it also broadcasts $\boldsymbol{w}^{0}$ to its currently visible satellites at different altitudes using our proposed NOMA-OFDM scheme (see \sref{sec:noma}). The adjacent HAPs, upon receiving $\boldsymbol{w}^{0}$, forward $\boldsymbol{w}^{0}$ to their respective next-hop neighbors, and also send it to their respective visible satellites. This continues until the sink HAP receives $\boldsymbol{w}^{0}$ and transmits it to its currently visible satellites. In subsequent rounds ($\beta=1,2,...$), the same procedure as above takes place again, except that $\boldsymbol{w}^{0}$ is replaced by $\boldsymbol{w}^{\beta}$.


\begin{algorithm}[!t]
\caption{\small NomaFedHAP model propagation algorithm}\label{algorithm1}
\kwInit{Global iteration $\beta$=0, $\boldsymbol{w}^{\beta}$, and $\mathcal U_\mathpzc{h}|_{\mathpzc{h}\in \mathcal H}=\phi$}
\While{Termination criterion is not met }{
\ForEach(\Comment*[h]{\scriptsize Global Model Propagation from source to sink HAP}){$\mathpzc{h}\in \mathcal H$}{
  Forward $\boldsymbol{w}^{\beta}$  to its next-neighbor HAP\\
  \If{ $\mathpzc{h}$ has LoS connection with satellites}{
  Transmit $\boldsymbol{w}^{\beta}$ to its visible satellites\\}
\ForEach(\Comment*[h]{\scriptsize Local/Sub-orbital Models propagation}){$k\in \mathcal K$ that is visible to $\mathpzc{h}$ }{  
Retrain $\boldsymbol{w}^{\beta}$ to update $\boldsymbol{w}_{k}^{\beta}$ using its own data\\  

\ForEach(\Comment*[h]{\scriptsize Aggregation of sub-orbital models}){invisible $k'$ between $k$ and $k+1$}{

Retrain $\boldsymbol{w}^{\beta}$ to update $\boldsymbol{w}_{k'}^{\beta}$ using its own data\\
Aggregate $\boldsymbol{w}_{k}^{\beta}$ and $\boldsymbol{w}_{k'}^{\beta}$ using (\ref{eq:orbitalagg})\\ 
Propagate $\boldsymbol{w}^{\beta}$ and $\boldsymbol{w}_{k}^{\beta}$ to next $k'$ 
} 
Transmit  $\boldsymbol{w}_{k}^{\beta}$ by Sat $k+1$ to its visible HAP\\ 
Update $\mathcal U_{\mathpzc{h}} \leftarrow  \mathcal U_{\mathpzc{h}} \cup \{\boldsymbol{w}_{k}^{\beta}\}$\\
Store all the propagating Sat IDs\\
}
}
\ForEach (\Comment*[h]{\scriptsize Sub-orbital Models Propagation from sink to source HAP}){$\mathpzc{h}\in \mathcal H$}{ 
{Transmit} $\mathcal U_{\mathpzc{h}}$ to the next neighboring HAP
}
{$ \beta \leftarrow  \beta+1$}}
\nonl\rule{0.95\linewidth}{0.5pt}\\
\nonl{\parbox{\linewidth}{\hspace*{-1em}\scriptsize{{\bf Note:} The above process provides a  sequential representation for clarity. However, in real-world scenarios and our simulations, computation and transmission occur concurrently.}}}
\end{algorithm}

{\bf Local and Sub-orbital Model Propagation within Satellite Layer} (\fref{Picture4}).
In any round, say the $\beta$-th, as soon as the visible satellites successfully receive and decode the global model $\boldsymbol{w}^{\beta}$, each of them performs two tasks. First, it retrains $\boldsymbol{w}^{\beta}$ using its own data to obtain an updated local model $\boldsymbol{w}_{k}^{\beta}$.
Second, it transmits both $\boldsymbol{w}^{\beta}$ and a weighted version of $\boldsymbol{w}_{k}^{\beta}$, which is $\boldsymbol{w}_{k}^{\beta} := \gamma_{k} \boldsymbol{w}_{k}^{\beta}+\boldsymbol{0}$
(see Eq.~\eqref{eq:orbitalagg} below, where $\gamma_{k}$ is defined similarly), to its next-hop satellite $k'$ via ISL. The propagation direction is pre-designated as either clockwise or counterclockwise. Sending $\boldsymbol{w}^{\beta}$ is to ensure $k'$ to have a copy of the global model regardless of whether $k'$ is visible to any $\mathcal {PS}$. Next, the satellite $k'$ will first retrain $\boldsymbol{w}^{\beta}$ to obtain $\boldsymbol{w}_{k'}^{\beta}$, like what $k$ did; then, it will perform \textit{sub-orbital aggregation} by combining its own $\boldsymbol{w}_{k'}^{\beta}$ with the received $\boldsymbol{w}_{k}^{\beta}$ (which has been weighted by $k$) as follows:
\begin{align}\label{eq:orbitalagg}
    \boldsymbol{w}_{k'}^{\beta}= \gamma_{k'} \boldsymbol{w}_{k'}^{\beta}+\boldsymbol{w}_{k}^{\beta} 
\end{align}
where $\gamma_{k'} = {|\mathcal {D}_{k'}|}/{|\mathcal {D}|}$ is a scaling factor that weighs model importance according to data size, $|\mathcal {D}_{k'}|$ is the data size of satellite $k'$ and $|\mathcal {D}|$ is the sum of all the data sizes in the same orbit. Thus, $\boldsymbol{w}_{k'}^{\beta}$ is a partially aggregated model which we refer to as an {\em sub-orbital model}. Next, $\boldsymbol{w}_{k'}^{\beta}$ will be sent to the next-hop satellite (say $k''$), together with the global model $\boldsymbol{w}^{\beta}$, like above. This uni-directional forwarding continues until reaching a visible satellite (say $k^*$), which will stop forwarding further; instead, after training and partial-aggregation like above, it will transmit the aggregated model $\boldsymbol{w}_{k^*}^\beta$ to its visible HAP using NOMA, with a power coefficient based on its current altitude (static and dynamic power allocations are both evaluated in \sref{sec:Evaluation}). Hence essentially, Eq.~\eqref{eq:orbitalagg} is FedAvg computed in a {\em sequential} manner.

In summary, unlike traditional FL approaches where the $\mathcal {PS}$ must wait for all the satellites to be visible for an appropriate visibility period before receiving their updated local models and then aggregating them into a global model, we are able to ``activate'' all satellites, even those that are invisible or visible within a short visibility period (through introducing the NOMA scheme), by propagating satellite local models together with the sub-orbital models to invisible satellites within the same orbit, and thus accelerate the FL convergence processes.

{\bf Sub-orbital Model Propagation within HAP Layer} (Fig.~\ref{Picture3}).
After each HAP receives the sub-orbital models from all its visible satellites, it will propagate these sub-orbital models along the \textit{reverse} pathway (from the \textit{sink} HAP to the \textit{source} HAP). The source HAP will then aggregate all the received sub-orbital models into a global model $\boldsymbol{w}^{\beta+1}$, following Section~\ref{sec:aggreg} (Eq. \ref{eq:agg}), and propagates $\boldsymbol{w}^{\beta+1}$ to all the HAPs as in phase 1 (\fref{Picture2}).

Algorithm~\ref{algorithm1} summarizes the above three phases of model propagation. It has an overall complexity of $\mathcal{O}(\mathsf{T}*\mathsf{A}))$, where $\mathsf{T}$ is the number of iterations until the termination criterion is met, and $\mathsf{A}$ represents the nested loop operations governing the training, aggregation, and propagation of models by invisible satellites. Please note, the loops at lines 2, 6, 8, and 15 occur concurrently, while lines  9-11 run sequentially.


\begin{algorithm}[!t]
\caption{\small NomaFedHAP model aggregation algorithm}\label{algorithm2}
\kwInit{$\mathcal U^{\beta}\neq\phi$}

\While{Termination criterion is not met }{
{Sort} all received sub-orbital models as \eqref{Eqn34} and \eqref{Eqn35}\\
{Filter} redundant sub-orbital models from $\mathcal{U^{\beta}}$  according to satellite IDs \\
{Generate} $\mathcal{U'^{\beta}}$\\
\eIf{Source HAP receives all sub-orbital models}{
Aggregate $\boldsymbol{w}^{\beta+1}$ using (\ref{eq:agg})\\
}{{Wait} for all sub-orbital models to be received}
{$ \beta \leftarrow  \beta+1$}}
\end{algorithm}

\subsection{NomaFedHAP Model Aggregation Algorithm}\label{sec:aggreg} 
When all HAPs gather the sub-orbital models from their visible satellites, a propagation round begins from the sink HAP to the source HAP by forwarding the received models. Once the source HAP receives all sub-orbital models, it sorts them as follows:
\begin{equation} \label{Eqn34}
    \mathcal{U}^{\beta}= \{ S_{1}, S_{2},\dots,S_\textit{L}\}
\end{equation}
where $\mathcal{U}^{\beta}$ is the set of all sub-orbital models received by HAPs in round $\beta$, and 
 $S_{l}\subset\mathcal{U}^{\beta}$ is a subset of $\mathcal{U}^{\beta}$ that comprises all sub-orbital models for an orbit $l$, which can be expressed as
\begin{equation} \label{Eqn35}
    S_{l}= \Big\{ \underbrace{\{{\boldsymbol{w}_{k}^{\beta}}\}_{\mathpzc{h}_1}}_{\mathcal{U}_{\mathpzc{h}=1}},\underbrace{\{{\boldsymbol{w}_{k}^{\beta}}\}_{\mathpzc{h}_2}}_{\mathcal{U}_{\mathpzc{h}=2}},\dots,\underbrace{\{{\boldsymbol{w}_{k}^{\beta}}\}_{H}}_{\mathcal{U}_{H}}\Big\}_l
\end{equation}
It is possible for $S_{l}$ to encompass redundant sub-orbital models, particularly when a satellite is visible to multiple HAPs at the same time. In such cases, NomaFedHAP utilizes satellites' IDs, which are unique and sent as metadata with each sub-orbital model, to filter out these redundant models. Consequently, NomaFedHAP yields $\mathcal{U}'^{\beta} = \{S'_{l_1},S'_{l_2},\dots,S'_{L} \}$, where $S'_l$ is a set of distinct sub-orbital models for an orbit $l$.

Subsequently, for all orbits $L$, NomaFedHAP checks whether any satellite ID has been excluded from $\mathcal{U}'^{\beta}$. This scenario occurs infrequently, typically when an orbit lacks visible satellites to any HAP for an extended time. In such cases, NomaFedHAP will not generate an updated version of $\boldsymbol{w}^\beta$ immediately. Instead, it waits until any HAP $\mathpzc{h}$ receives the sub-orbital models containing the IDs of those satellites. These sub-orbital models are then transmitted to the source HAP to update $\mathcal{U}'^{\beta}$. This ensures a balanced collection of models from all orbits, allowing all satellites to contribute equally in generating $\boldsymbol{w}^\beta$. It also prevents biasing the global model toward a specific orbit.

Once the source HAP has received all the remaining sub-orbital models and updated $\mathcal{U}'^{\beta}$, NomaFedHAP aggregates all the models in $\mathcal{U}'^{\beta}$ as follows: 
\begin{equation} \label{eq:agg}
   \boldsymbol{w}^{\beta+1}= \sum_{{l=1}}^{{L}} \sum_{{\mathpzc{h}=1}}^{{H}} \frac{|\mathcal {D}|_{{\mathcal {U}_\mathpzc{h}'}}^{l}}{|\mathcal {D}|_{l}} \boldsymbol{w}_{\mathcal {U}_\mathpzc{h}'}  ^{\beta}
\end{equation}
where $|\mathcal {D}|_{\mathcal {U}_\mathpzc{h}'}^{l}$ is the total data size of the satellites in the set ${\mathcal {U}_\mathpzc{h}'}$ for an orbit $l$, whereas $|\mathcal {D}|_{l}$ is the total data size for an orbit $l$. Subsequently, the entire procedure will recommence from \sref{Model_propagation}, until the FL model is converged.

Algorithm~\ref{algorithm2} summarizes the entire process. It has an overall complexity of $\mathcal{O}(\mathsf{T}(\mathsf{B}Log_{2}\mathsf{B}))$, where $\mathsf{B}$ is the number of received sub-orbital models. The $(\mathsf{B}Log_{2}\mathsf{B})$ term signifies the complexity of sorting and organizing the models, which dominates the filtering operations and the subsequent aggregation process.

\subsection{Convergence Analysis of NomaFedHAP}
In this section, we analyze the convergence of the NomaFedHAP approach. To do that we make the following assumptions regarding the loss functions of the satellites $F_1, \ldots, F_K$, $1\leq k\leq K$. These assumptions align with the commonly encountered assumptions in the FL literature. \cite{Li2020On, ribero2022federated}.
\begin{asu}[Smoothness]\label{as:1}
All the functions $F_1, \ldots, F_K$ in Equation \eqref{eqn:loss} exhibit $\Lambda$-smoothness, as for any $\boldsymbol{a}$ and $\boldsymbol{b} \in \mathbb{R}^d$ and any $k \in \mathcal{K}$, it holds that:
$$F_k(\boldsymbol{a})\leq F_k(\boldsymbol{b})+{(\boldsymbol{a}-\boldsymbol{b})}^{\top}\nabla F_{k}(\boldsymbol{b})+\frac{\Lambda}{2}\|\boldsymbol{a}-\boldsymbol{b}\|^2_2$$
\end{asu}
\begin{asu}[Strong convex]\label{as:2}
All the functions $F_1, \ldots, F_K$ in Equation \eqref{eqn:loss} exhibit $\varrho$-strongly convex, as for any $\boldsymbol{a}$ and $\boldsymbol{b} \in \mathbb{R}^d$ and any $k \in \mathcal{K}$, it holds that:
$$F_k(\boldsymbol{a})\geq F_k(\boldsymbol{b})+{(\boldsymbol{a}-\boldsymbol{b})}^{\top}\nabla F_{k}(\boldsymbol{b})+\frac{\varrho}{2}\|\boldsymbol{a}-\boldsymbol{b}\|^2_2$$
\end{asu}
\begin{asu}[Bounded variance]\label{as:3}
Let $\xi_{k}^{\beta}$ be a data point randomly sampled from the dataset $D_k$ of satellite $k$. The variance of the stochastic gradients at each satellite is constrained as follows:
$$\mathbb{E}\|\nabla F_k(\boldsymbol{w}_{k}^\beta,\xi_{k}^{\beta})-\nabla F_{k}(\boldsymbol{w}_{k}^\beta)\|_2^2\leq \sigma_{k}^{2}$$ 
This constraint applies to all satellites, with $k$ ranging from 1 to $K$.
\end{asu}
\begin{asu}[Bounded stochastic gradients]\label{as:4}
The square norm of the expected stochastic gradients of $F_k$ is uniformly bounded, satisfying the inequality:
$$\mathbb{E}\|\nabla F_k(\boldsymbol{w}_{k}^\beta,\xi_{k}^{\beta})\|_2^2\leq G^{2}$$ 
This constraint applies to all satellites, with $k$ ranging from 1 to $K$.
\end{asu}
\begin{theorem}\label{th:Theorem}
Supposing that Assumptions 1-4 are met, with $\Lambda, \varrho, \sigma_{k},$ and $G$ defined accordingly, we can consider a Federated Learning in Low Earth Orbit (FL-LEO) configuration with $K$ fully participating satellites in each round $\beta$. In this setup, the goal is to train a machine learning model as in Equation \eqref{eqn:loss}, and as a result, the NomaFedHAP framework fulfills the following condition:
\begin{equation}
    \mathbb{E}[F(\boldsymbol{w}^{|\mathcal{B}|})]-F^*\leq\frac{2\upsilon}{\delta+|\mathcal{B}|}\biggl(\frac{Z}{\varrho}+2 \Lambda\|\boldsymbol{w}^{0}-\boldsymbol{w}^{*}\|^2_2 \biggl)
\end{equation}
Here, we set $\upsilon$ as $\frac{\Lambda}{\varrho}$, $\delta$ as $\max\{8\upsilon,J\}$, the learning rate $\zeta_{{\beta}}$ as $\frac{2}{\varrho(\delta+\beta)}$, and define $Z$ as 
$$Z=\sum_{k=1}^{K} {\alpha_k}^2{\sigma_{k}}^2+6\Lambda \Gamma+8(J-1)^2G^2$$
where $\alpha_k$ is calculated as $\frac{|\mathcal{D}_k|}{|\mathcal{D}|}$ and represents the full satellites participation, and $\Gamma=F^*-\sum_{k=1}^{K}\alpha_kF^{*}_{k}\geq0$.
\end{theorem}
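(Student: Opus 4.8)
The plan is to first reduce NomaFedHAP's propagation-and-aggregation procedure to a standard full-participation FedAvg update, and then carry out the classical perturbed-SGD convergence analysis for $\varrho$-strongly convex, $\Lambda$-smooth objectives in the style of \cite{Li2020On}. For the reduction, I would verify that the sequential sub-orbital aggregation in \eqref{eq:orbitalagg} together with the HAP-layer aggregation in \eqref{eq:agg} reproduces exactly the data-size-weighted average $\sum_{k\in\mathcal K}\alpha_k\boldsymbol{w}_k^{\beta,J}$ with $\alpha_k=|\mathcal D_k|/|\mathcal D|$. Since every satellite (visible, straggler, or invisible) contributes its locally trained model through intra-orbit propagation, the scheme is equivalent to $K$ fully participating clients each running $J$ local SGD steps as in \eqref{eq:local_update}; the remaining analysis can therefore disregard visibility and treat the global iterate as the weighted mean of the local iterates.

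Second, I would introduce the usual virtual sequences: the per-step averaged iterate $\bar{\boldsymbol{w}}^{t}=\sum_k\alpha_k\boldsymbol{w}_k^{t}$, defined even during local steps, the averaged stochastic gradient $\boldsymbol{g}^{t}=\sum_k\alpha_k\nabla F_k(\boldsymbol{w}_k^{t},\xi_k^{t})$, and its mean $\bar{\boldsymbol{g}}^{t}=\sum_k\alpha_k\nabla F_k(\boldsymbol{w}_k^{t})$. Expanding $\|\bar{\boldsymbol{w}}^{t+1}-\boldsymbol{w}^*\|_2^2=\|\bar{\boldsymbol{w}}^{t}-\zeta_t\boldsymbol{g}^{t}-\boldsymbol{w}^*\|_2^2$ and taking expectation, the zero-mean cross term drops and Assumption~\ref{as:3} bounds the stochastic-noise contribution by $\zeta_t^2\sum_k\alpha_k^2\sigma_k^2$. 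This isolates the deterministic term $\|\bar{\boldsymbol{w}}^{t}-\zeta_t\bar{\boldsymbol{g}}^{t}-\boldsymbol{w}^*\|_2^2$, which I would further expand into the distance term, an inner-product term $-2\zeta_t\langle\bar{\boldsymbol{w}}^{t}-\boldsymbol{w}^*,\bar{\boldsymbol{g}}^{t}\rangle$, and $\zeta_t^2\|\bar{\boldsymbol{g}}^{t}\|_2^2$.

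Third, and this is where the main work lies, I would bound the inner-product term using $\varrho$-strong convexity (Assumption~\ref{as:2}) and the gradient term using $\Lambda$-smoothness (Assumption~\ref{as:1}). Because the gradients are evaluated at the \emph{local} iterates $\boldsymbol{w}_k^{t}$ rather than at $\bar{\boldsymbol{w}}^{t}$, this step forces me to introduce (i) the heterogeneity gap, which after applying smoothness and the definition $\Gamma=F^*-\sum_k\alpha_kF_k^*\ge0$ contributes the $6\Lambda\Gamma$ piece of $Z$, and (ii) the local-divergence term $\sum_k\alpha_k\|\bar{\boldsymbol{w}}^{t}-\boldsymbol{w}_k^{t}\|_2^2$. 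Controlling this divergence is the key obstacle: since the local iterates are synchronized at most $J$ steps earlier and the learning rate is non-increasing, Assumption~\ref{as:4} yields a bound of order $\zeta_t^2(J-1)^2G^2$, eventually producing the $8(J-1)^2G^2$ term. Collecting the three contributions gives the one-step recursion
\begin{equation}
\mathbb{E}\|\bar{\boldsymbol{w}}^{t+1}-\boldsymbol{w}^*\|_2^2\le(1-\varrho\zeta_t)\,\mathbb{E}\|\bar{\boldsymbol{w}}^{t}-\boldsymbol{w}^*\|_2^2+\zeta_t^2 Z,
\end{equation}
with $Z=\sum_k\alpha_k^2\sigma_k^2+6\Lambda\Gamma+8(J-1)^2G^2$.

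Finally, I would close the argument by solving this recursion. With the diminishing learning rate $\zeta_t=\tfrac{2}{\varrho(\delta+t)}$ and $\delta=\max\{8\upsilon,J\}$, a standard induction shows $\mathbb{E}\|\bar{\boldsymbol{w}}^{t}-\boldsymbol{w}^*\|_2^2\le\tfrac{\nu}{\delta+t}$ for $\nu=\max\{\tfrac{4Z}{\varrho^2},(\delta+1)\|\boldsymbol{w}^0-\boldsymbol{w}^*\|_2^2\}$; the base case is immediate and the inductive step uses $\delta\ge8\upsilon$ to absorb the $(1-\varrho\zeta_t)$ contraction. Converting the parameter-distance bound to a function-value bound via $\Lambda$-smoothness, $\mathbb{E}[F(\bar{\boldsymbol{w}}^t)]-F^*\le\tfrac{\Lambda}{2}\mathbb{E}\|\bar{\boldsymbol{w}}^t-\boldsymbol{w}^*\|_2^2$, bounding the $\max$ in $\nu$ by the corresponding sum, and again using $\delta\ge8\upsilon$ to re-express the initialization term, yields the stated bound with $\upsilon=\Lambda/\varrho$ evaluated at $t=|\mathcal B|$. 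The anticipated difficulty is entirely in the third step, namely tracking the drift and heterogeneity terms with the correct constants so that they assemble exactly into $Z$, whereas the reduction and the final induction are routine.
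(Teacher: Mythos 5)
Your proposal follows essentially the same route as the paper's proof: the paper also reduces the scheme to full-participation FedAvg with virtual averaged sequences $\bar{\boldsymbol{v}}^{\beta},\bar{\boldsymbol{w}}^{\beta}$, invokes exactly the three bounds you derive in your third step as Lemmas 1--3 (one-step SGD descent, variance bound $\sum_k\alpha_k^2\sigma_k^2$, and local-divergence bound of order $\zeta_\beta^2(J-1)^2G^2$), assembles them into the recursion $\Delta^{\beta+1}\le(1-\zeta_\beta\varrho)\Delta^{\beta}+\zeta_\beta^2Z$, and closes with the same induction and smoothness-based conversion to function values. Your write-up is if anything more explicit than the paper's (e.g., in justifying the reduction of the sub-orbital/HAP aggregation to the weighted average, and in correctly attributing the final step $\mathbb{E}[F(\bar{\boldsymbol{w}}^\beta)]-F^*\le\tfrac{\Lambda}{2}\Delta^\beta$ to smoothness rather than strong convexity), but the argument is the same.
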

We provide the proof of Theorem 1 in Appendix A.

\section{Performance Evaluation}\label{sec:Evaluation}

\subsection{Experiment Setup}

\textbf{LEO Satellite Constellation.} 
We examine a Walker-delta constellation $\mathcal K$  \cite{walker1984satellite} consisting of 60 LEO satellites in six orbits, with ten satellites in each orbit (see Figs.~\ref{Picture_6} and \ref{Picture_7}). The six orbits are located on three different shells at altitudes of 500 km, 1000 km, and 1500 km above the Earth's surface. Each shell contains two orbits and each orbit has an inclination angle of 70$^\circ$. We examine a variety of $\mathcal {PS}$ scenarios, including GS, single HAP, two HAPs, and three HAPs. For both the GS and single-HAP scenarios, they are located at Rolla, Missouri, USA (but can be anywhere of/above the Earth). The scenarios with two/three HAPs involve one HAP positioned above the city of Chinook, MT, USA, and another HAP positioned above the city of Primorsky Krai, Russia, in addition to the HAP above the city of Rolla, USA. All $\mathcal {PS}s$  are situated at an altitude of 25 km above the Earth's surface and maintain a minimum elevation angle of 10$^\circ$. To compute the visiting pattern of LEO satellites to each $\mathcal {PS}$, we use a simulator called Simulator Tool Kits (STK) developed by AGI. All $\mathcal {PS}$-Satellite connections are monitored over a period of three days to obtain a comprehensive set of results.
\begin{figure*}[t]
     \centering
    \includegraphics[width=1\linewidth]{4_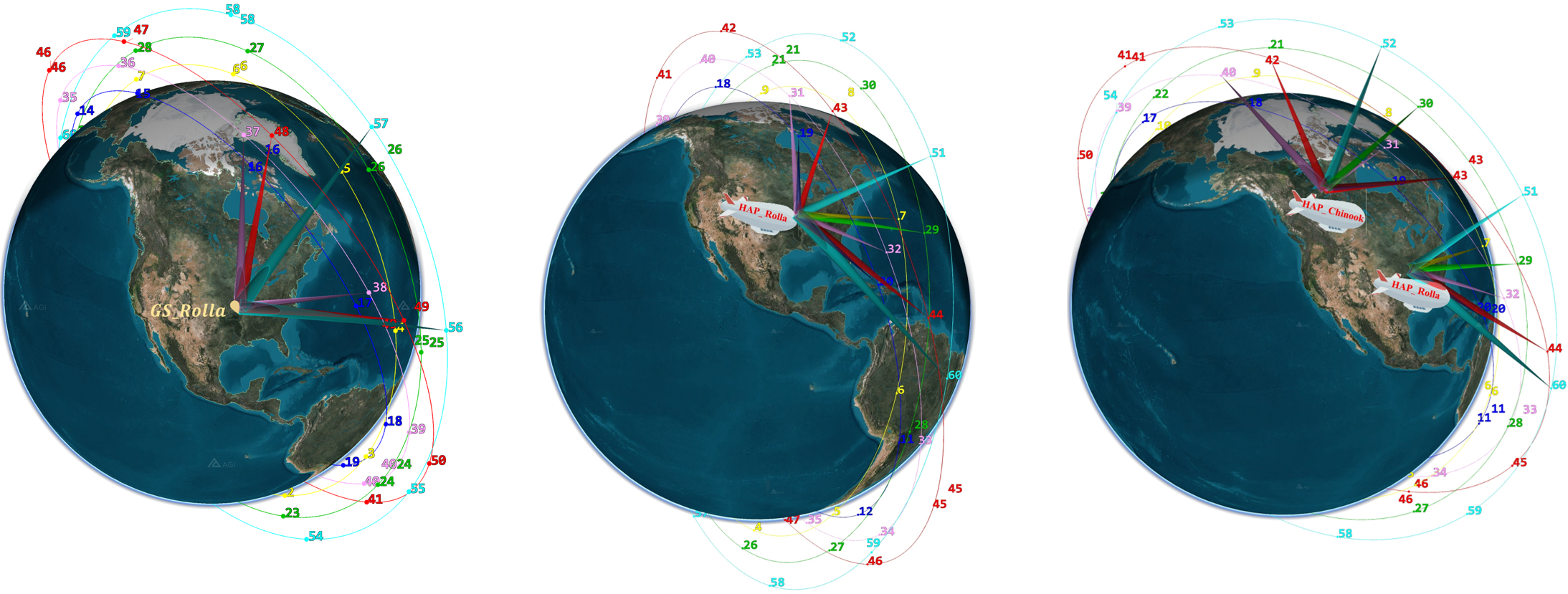}
    \subfloat[ GS located in Rolla.]{\hspace{.25\linewidth}}
    \subfloat[Single HAP]{\hspace{.25\linewidth}}
   \subfloat[Two HAPs ]{\hspace{.25\linewidth}} 
    \subfloat[Three HAPs ]{\hspace{.25\linewidth}}
      \vspace{-0.5mm}
    \caption{Simulated Walker-delta constellation in 3D with a variety of $\mathcal {PS}$ scenarios: (a) GS located in Rolla, MO, USA, (b-d) HAPs located above Rolla, USA; Chinook, USA; and Primorsky Krai, Russia.}
\label{Picture_6}
\vspace{1mm}
     \centering
     \subfloat[GS located in Rolla.]
         {\centering 
         {\includegraphics[width=0.24\linewidth]{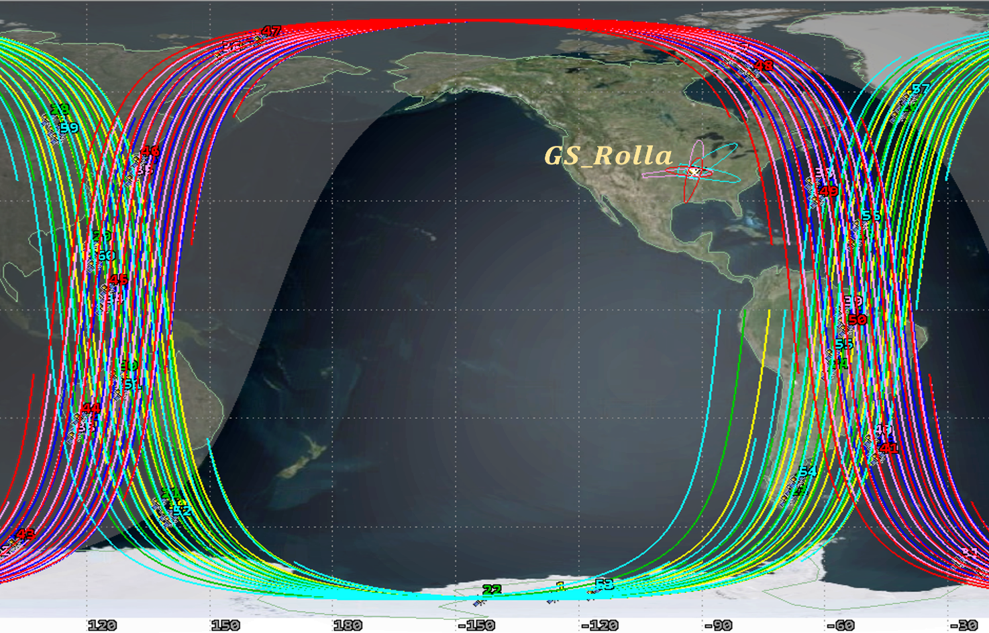}}}
     \hfill
 \subfloat[Single HAP]{
         \includegraphics[width=0.24\linewidth]{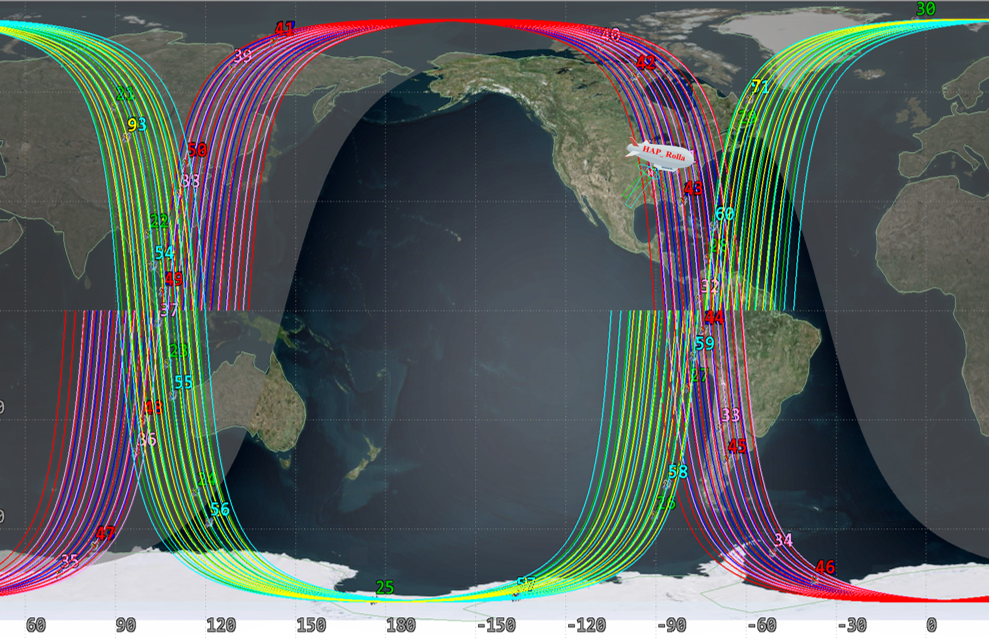}}
     \hfill
 \subfloat[Two HAPs]{
         \includegraphics[width=0.24\linewidth]{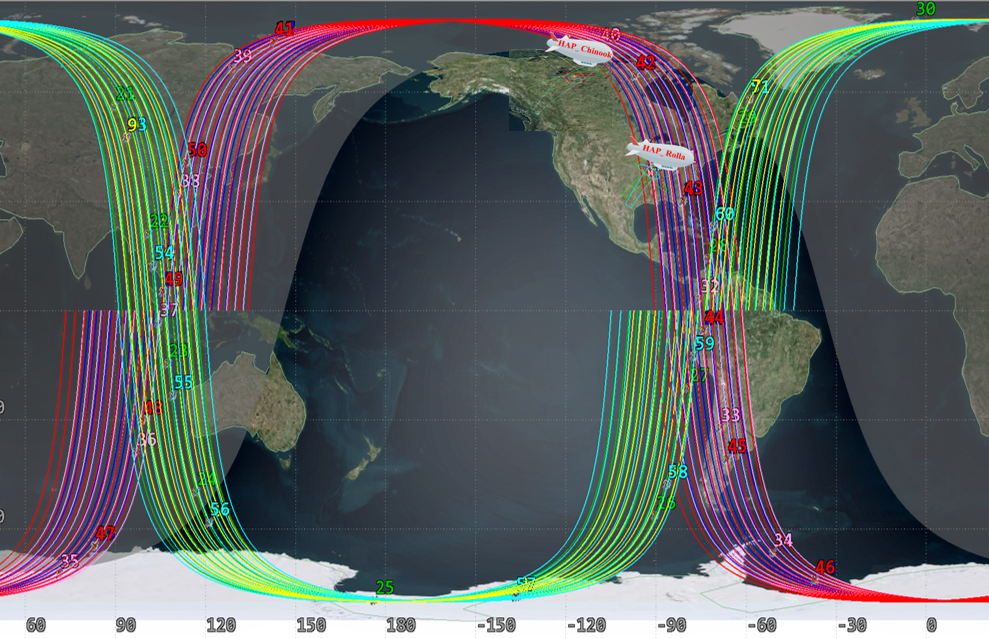}}
   \hfill
 \subfloat[Three HAPs]{
         \includegraphics[width=0.245\linewidth]{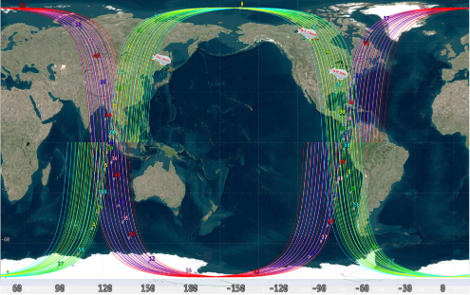}}
        \caption{Simulated Walker-delta constellation in 2D with a variety of $\mathcal {PS}$ scenarios: (a) GS located in Rolla, MO, USA, (b-d) HAPs located above Rolla, USA; Chinook, USA; and Primorsky Krai, Russia}\vspace{-3mm}
\label{Picture_7}
\end{figure*}

\textbf{Communication links.} The parameters pertaining to the communication channel of the NOMA system, as discussed in \sref{Com_link}, are assigned as follows: $P_{s}$ varies from -40 to 40 dBm, 
the antenna gain for $G_k(\theta)$ and $G_{\mathcal {PS}}$ is set to 6.98 dBi, the carrier frequency $f$ is 20 GHz,  $T$ is 354.81 K, and $B$ is set to 50 MHz. The power allocation coefficients allocate 75\% and 25\% power to the FS and NS, respectively. The path loss exponent is 4, and the fading severity parameter $m$ is 2. The average power of the multi-path component $\iota$ and the LoS component $\Omega$ are set to 0.279 and 0.251, respectively. Finally, we select the communication channel type as shadowed Rician, with QPSK as the modulation type.

\textbf{Baselines.} To the best of our knowledge, the NOMA scheme has not been introduced to FL with LEO satellites as clients before. Therefore, we first investigate the performance of NomaFedHAP with a single HAP as $\mathcal {PS}$ in comparison with traditional OMA schemes. Second, we compare NomaFedHAP against state-of-the-art (SOTA) FL-LEO approaches proposed recently and reviewed in \sref{sec:releated}, including: 
\begin{itemize}
    \item {\bf Synchronous FL approaches:} FedAvg \cite{mcmahan2017communication}, FedHAP \cite{happaper}, FedISL \cite{razmi}, and DSFL \cite{wu2022dsfl}.
    \item {\bf Asynchronous FL approaches:} FedSatSchedule \cite{razmi2022scheduling}, FedSpace~\cite{so2022fedspace},  FedSat~\cite{razmi2022ground},  AsyncFLEO~\cite{mAsyFLEO}, and FedAsync \cite{xie2020asynchronous}.
\end{itemize}

\textbf{Datasets and ML models.}  To evaluate the performance of NomaFedHAP against baseline approaches, we focus on image classification. We employ commonly used model training datasets including MNIST, CIFAR-10, and CIFAR-100, which are frequently utilized in various FL-SatCom studies \cite{chen2022satellite,razmi2022scheduling, happaper,mAsyFLEO}. In addition, despite the lack of space application datasets, we utilize a real dataset of high-resolution satellite images called DeepGlobe for road extraction to provide a realistic evaluation of NomaFedHAP as well as demonstrate its applicability to real-world scenarios. 
The specifics of each dataset are as follows:
\begin{itemize}
    \item \textbf{MNIST \cite{deng2012mnist}:} is a dataset consisting of 70,000 grayscale images of handwritten numbers of size 28$\times$28 pixels. To train our satellites, we use a convolutional neural network (CNN) with three convolutional layers, three pooling layers, and one fully connected layer with 437,840 trainable parameters.

    \item \textbf{CIFAR-10 \cite{CIFAR-10}:} is a dataset consisting of 60,000 colored images of ten classes, each with 6000 images. Each image has a size of 32$\times$32 pixels (images of animals and vehicles). To train our satellites, we also use the CNN model with 798,653 training parameters.

    \item \textbf{CIFAR-100 \cite{CIFAR-10}:} is a dataset similar to CIFAR-10, but contains 100 classes with 600 images each, which makes the training task more challenging. As a result, the CNN model is constructed using six convolutional layers and two fully connected layers, with 7,759,521 training parameters.

    \item \textbf{DeepGlobe for Road Extraction \cite{DeepGlobe18}:} is a dataset consisting of 6,226   colored satellite images of size 1024$\times$1024 with a high-resolution of 50 cm/pixel. For more effective training, we apply various data augmentation techniques, such as flipping, rotating, and contrast adjusting, resulting in a dataset size of 12,452. To train our satellites, We use the U-Net model with 3,048,576 training parameters.
\end{itemize}
Our analysis considers both IID and non-IID data distributions (except for DeepGlobe where the images are already non-IID). In the IID setup, each satellite on each shell trains over the same classes of images, but these images are shuffled randomly and distributed equally across satellites. In the non-IID setup, satellites on each of two shells train on a distinct set of 30\% of the classes, while satellites on the other shell train on 40\% of the classes. The training hyperparameters are set as follows: the number of local training epochs is 100, the learning rate ranges from 0.1 to 0.0001, and the mini-batch size is 32.

\begin{figure*}[!t]
     \centering
         \subfloat[The BER vs. transmitting power.]{\centering 
         {\includegraphics[width=0.47\textwidth]{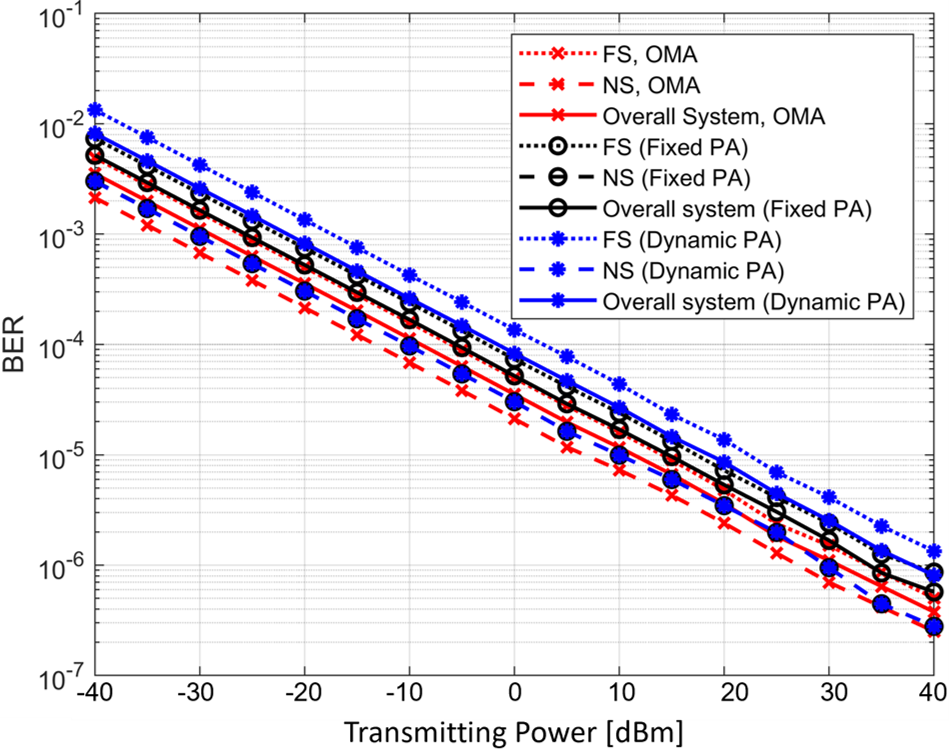}}}
     \hfill
     \subfloat[The achievable satellites capacity vs. transmitting power.]{\centering 
         \includegraphics[width=0.503\textwidth]{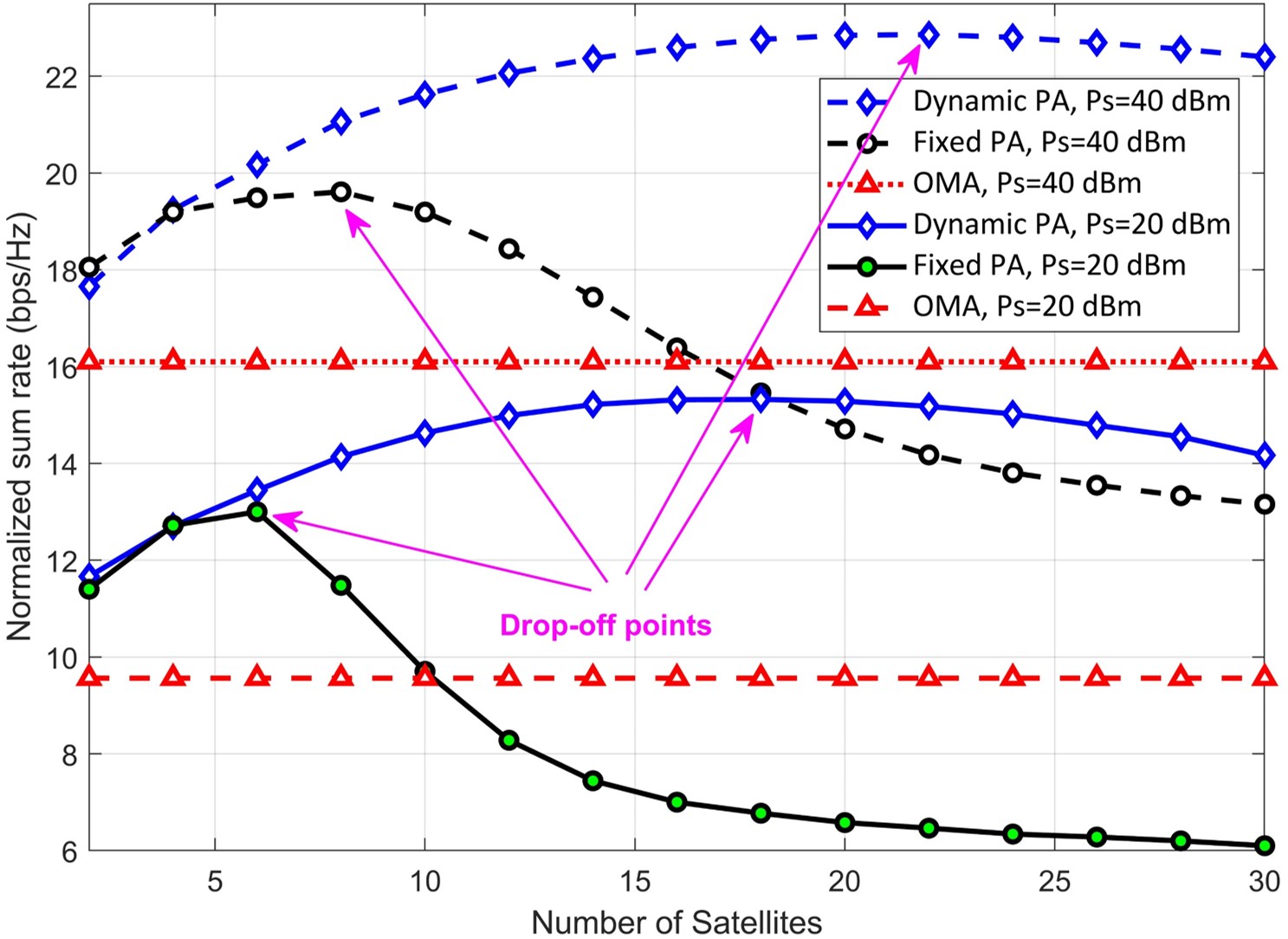}}
\hfill
        \caption{Performance of NomaFedHAP with fixed and dynamic power allocation vs. OMA scheme.}
\label{NOMA_rate}
\vspace{2mm}
     \centering
         \subfloat[The achievable data rate vs. transmitting power.]{\centering 
         {\includegraphics[width=0.5\textwidth]{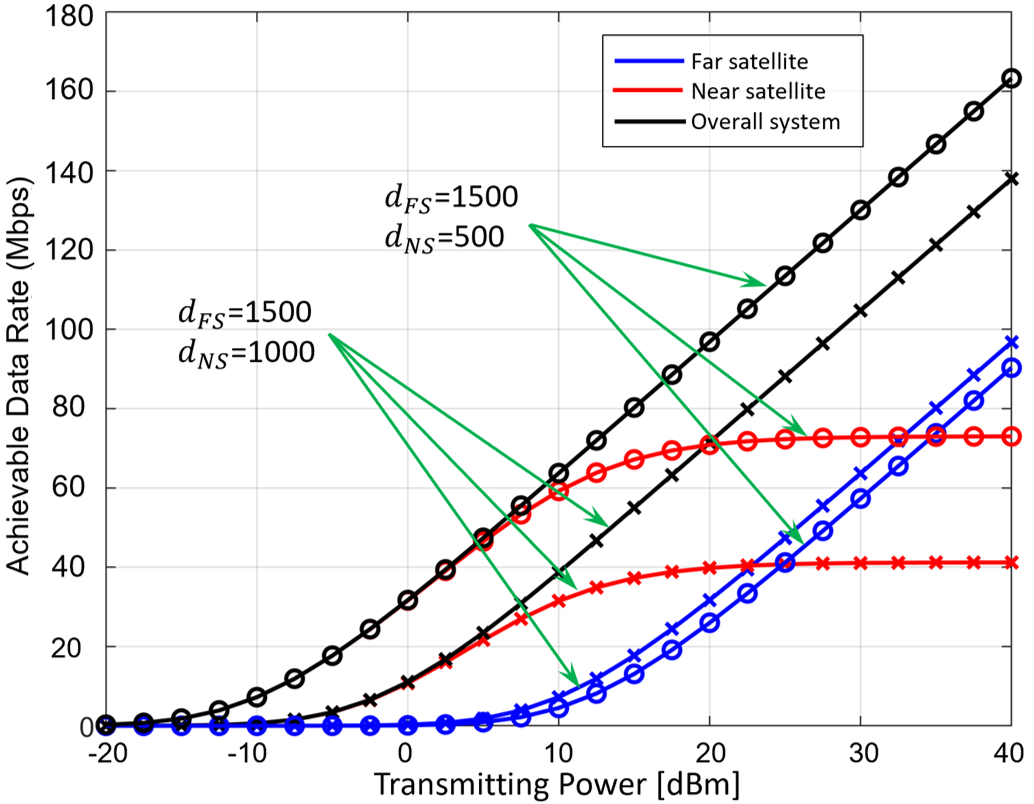}}}
     \hfill
     \subfloat[The outage probability vs. transmitting power.]{\centering 
         \includegraphics[width=0.495\textwidth]{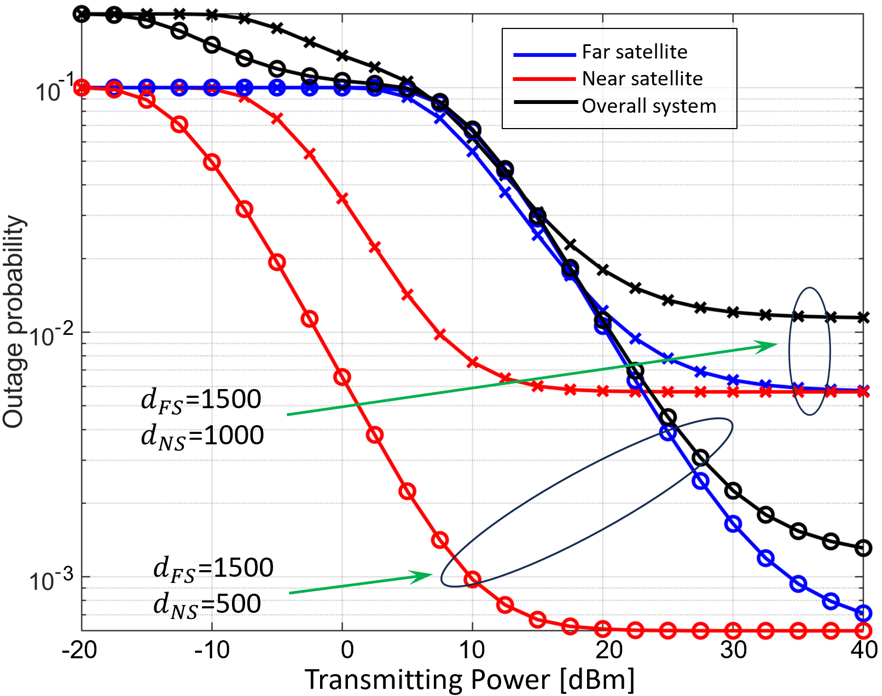}}
\hfill
        \caption{Comparison of achievable data rate and outage probability for NomaFedHAP at two altitudes (NS and FS). B=50 MHz.}
\label{NOMA_Scenario}\vspace{-3mm}
\end{figure*}

\subsection{Evaluation of NomaFedHAP's Communication Scheme}
We first compare the performance of the NomaFedHAP scheme to traditional OMA schemes with LEO satellites as clients. Figure~\ref{NOMA_rate}.a shows the BER performance against transmitting power of an NS and FS satellite at altitudes of 500km and 1500 km, respectively, for various scenarios: i) when using the NomaFedHAP with both static and dynamic power allocation (PA) based on their distance to $\mathpzc{h}$, and ii) when employing the OMA scheme. According to this figure, the OMA scheme achieves a slightly better BER compared to both PA scenarios of NomaFedHAP. This advantage is because the transmitted data from various satellites do not interfere with each other in OMA. However, despite this small improvement in the BER, the capacity of satellites that NomaFedHAP can support simultaneously at the same time and frequency 
is much higher compared to OMA. This is illustrated in Fig.~\ref{NOMA_rate}.b, which shows the achievable satellite capacity versus transmitting power of the simulated NomaFedHAP approach.

Furthermore, in Fig.~\ref{NOMA_Scenario}.a, we compare NomaFedHAP's achievable data rate versus transmitting power under various scenarios of altitudes for NS and FS, as well as for the overall system rate. From this figure, we observe that when the distance between the NS and FS is large enough, the overall system rate is higher compared to smaller distances. This is because the $\mathcal {PS}$ can easily decode the signals from distant satellites without interference. {\em Notably, in both cases, the achievable data rate ranges from 140 Mbps to 160 Mbps at a transmitting power of 40 dBm and bandwidth of 50 MHz, which is more than sufficient to transmit large models like the VGG-16 model of 528 MB. This means the uploading of models to the $\mathcal {PS}$ only takes around 30.17 to 26.4 seconds, demonstrating that employing NOMA with LEO satellites significantly reduces the required time for model uploading from minutes to just a few seconds.}

\begin{figure*}[!t]
     \centering
     \subfloat[Varying the multi-path average power component.]{\centering 
         \includegraphics[width=0.49\textwidth]{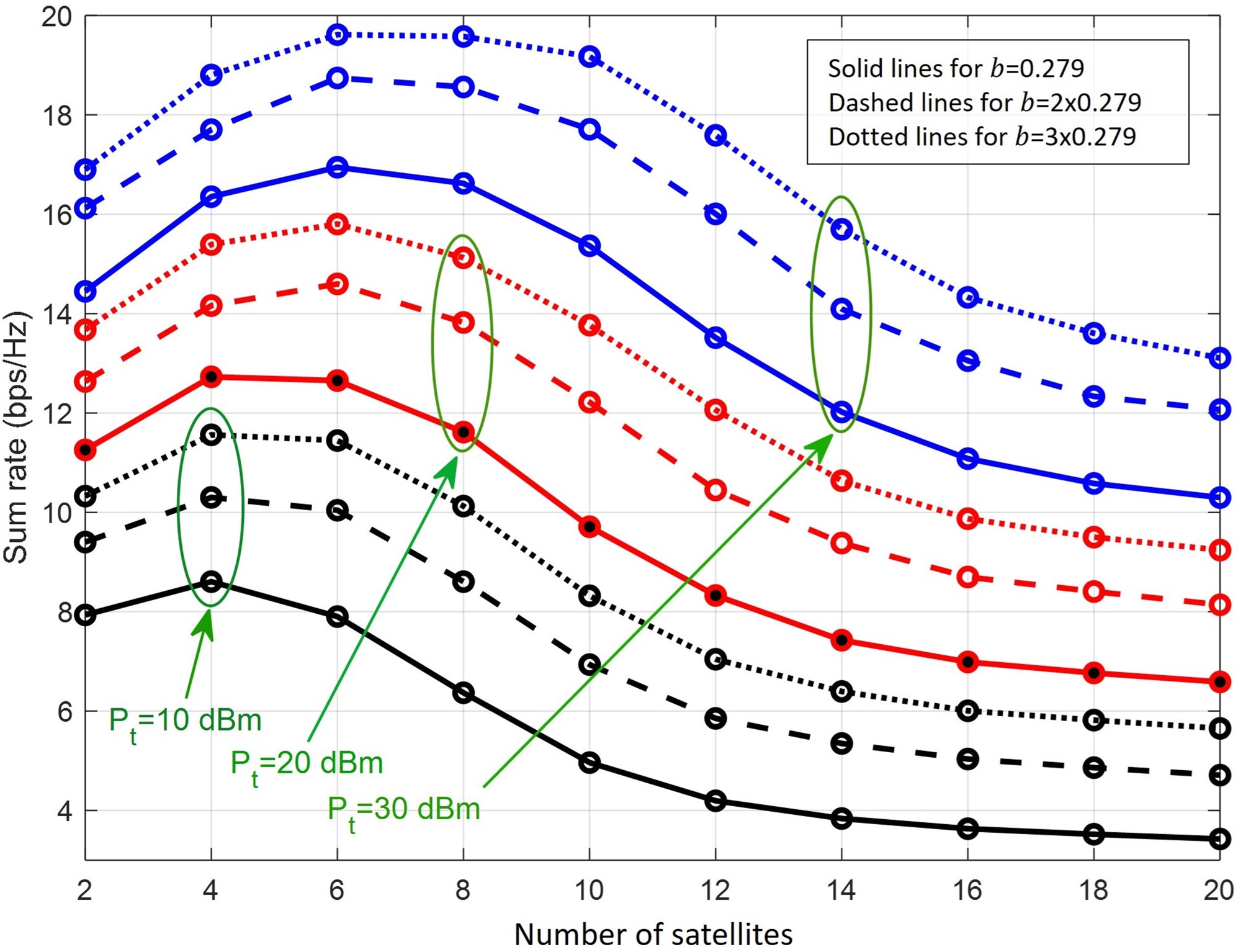}}
     \hfill
     \subfloat[Varying the LoS average power component.]{\centering 
        \includegraphics[width=0.5\textwidth]{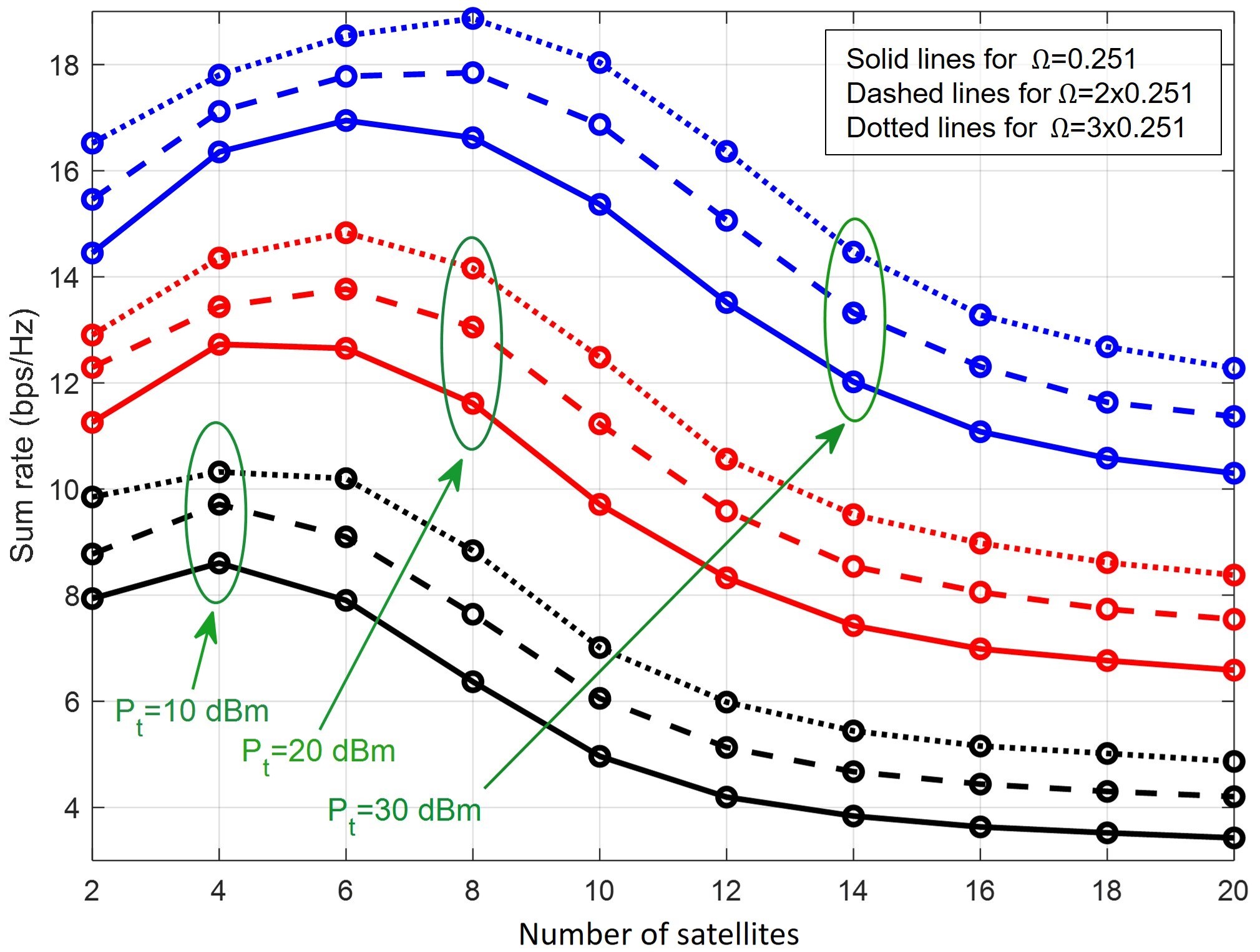}}
\hfill
        \caption{The sum data rate vs. number of supported satellites in NomaFedHAP under different settings.}
\label{satellites_rate}\vspace{-0.1cm}
\end{figure*}

In Fig.~\ref{NOMA_Scenario}.b, we show the OP experienced by $\mathpzc{h}$ versus transmitting power for the same two scenarios of the NS and FS, as well as for the overall system. According to the simulated results, the overall outage of the communication between the NS or FS to the $\mathcal {PS}$ is around  1\% chance when the transmitting power is around 20 dBm, and this decreases to 0.1\% when the transmitting power is increased to 40 dBm. These results demonstrate that NomaFedHAP achieves lower OP under various scenarios.

Finally, in Fig.~\ref{satellites_rate}, we show the total sum rate versus the maximum number of satellites that NomaFedHAP can support, considering varying multi-path and LoS average powers at different transmitting power levels. Two key observations emerge from this figure. Firstly, NomaFedHAP can support a high number of satellites even with lower multi-path or LoS average power. For instance, with a transmitting power of 30 dBm, using $\iota$=0.279 and $\Omega$=0.251,  {\em  NomaFedHAP can communicate with 14 satellites simultaneously, achieving 12 bps/Hz, and with $B$=50 MHz, each satellite will have approximately 600 Mbps. In addition, increasing $B$ will further boost the sum rate and allow for an increase in the number of supported satellites while maintaining high data rates.} Secondly, there is a drop-off point beyond which the data rate decreases. However, even after this drop-off, NomaFedHAP still supports a substantial number of satellites with high data rates. Therefore, introducing NOMA to FL-LEO significantly improves system capacity, allowing for more satellites to be launched without bandwidth limitations experienced with OMA schemes.

\subsection{Evaluation of NomaFedHAP's Convergence  Operation} 

\subsubsection{\bf Comparing NomaFedHAP with baselines} Table \ref{table_comp} presents the convergence performance of NomaFedHAP against baselines based on MNIST, CIFAR-10, and CIFAR-100 datasets in a non-IID setting with a GS in Rolla, USA serving as a $\mathcal {PS}$. Additionally, we use a GS at NP for some baselines to align with their setup environment. The table shows that NomaFedHAP achieves an accuracy of 82.73\%, 77.36\%, and 62.81\% on the MNIST, CIFAR-10, and CIFAR-100 datasets, respectively, within only 24-hour timestamp and without any impractical assumption using a GS as baselines, and not HAPs (results with HAPs are provided in Table~\ref{DataSet_com}). 

\begin{table*}[!t]
\setlength{\tabcolsep}{0.3em}
\centering
\renewcommand{\arraystretch}{1.2}
\caption{Accuracy and Convergence time of NomaFedHAP vs. baselines under non-IID setting.} 
\label{table_comp}
\resizebox{\linewidth}{!}{%
 \begin{tabular}{|p{0.35 cm}|p{2.6cm}|p{1.25cm} | p{1.85cm}| p{2cm}| p{2.2cm} | p{9cm} |}
 \hline
 \rowcolor{gray!30} &\centering FL-LEO &\multicolumn{3}{c |} {Accuracy (\%)} & \centering Convergence &  {Remark}\\
 \cline{3-5}
\rowcolor{gray!30} &\centering Approaches &\rmfamily MNIST& CIFAR-10& CIFAR-100&\centering time (h)& \\
 \hline 
\cellcolor{gray!30}\multirow{5}{*}
 & {FedAsync} \cite{xie2020asynchronous} & 70.36& 61.81& 56.37&\centering 48& GS at any location \\
    \cline{2-7}
\cellcolor{gray!30}&  FedSat  \cite{razmi2022ground} &  85.15  & 81.18 & 72.19  & \centering 24& GS located at the \textbf{NP} \\
   \cline{2-7}
 \cellcolor{gray!30}& {FedSatSched} \cite{razmi2022scheduling} & 73.61& 62.77& 54.59& \centering48& GS at any location \\
   \cline{2-7}
\cellcolor{gray!30} & FedSpace \cite{so2022fedspace}& 52.67&39.41& 36.04&\centering72 & Satellites need to upload portion of their raw data \\ 
 \cline{2-7}
\cellcolor{gray!30}\multirow{-5}{12mm}{\rotatebox[origin=p]{90}{\centering {\textbf{Async FL}}}} &AsyncFLEO \cite{mAsyFLEO}&  79.49& 69.88& 61.43&\centering 9& Assuming enough visible period for sink satellites\\
\cline{1-7}\cline{1-7}\cline{1-7}\cline{1-7}\cline{1-7}\cline{1-7}\cline{1-7}
  \cellcolor{gray!30}\multirow{6}{*}
&FedAvg \cite{mcmahan2017communication} & 79.41& 70.68& 61.66&\centering 60& GS at any location \\
\cline{2-7}
\cellcolor{gray!30}&FedISL  \cite{razmi} &  82.76& 73.62& 66.57  &\centering 8 & GS located at \textbf{NP}\\
 \cline{2-7}
\cellcolor{gray!30}&FedISL \cite{razmi} &  61.06& 52.11& 47.99&\centering72 & GS located at any location\\
  \cline{2-7}
\cellcolor{gray!30}&FedHAP \cite{happaper} &  81.62& 76.63&  59.89& \centering 48& GS located at any location \\
 \cline{2-7}
 \cellcolor{gray!30} &  
 {DSFL} \cite{wu2022dsfl}&  76.69&71.63 &62.18 & \centering 19& \small{Require higher data rates for model exchange due to Doppler shift}\\
  \cline{2-7}
 \cellcolor{gray!30} &  {FedLEO} \cite{e2023opt}&  {84.69}& 73.26& 61.31& \centering 36& GS at any location (requires scheduling sink satellite)\\
    \cline{2-7} 
\rowcolor{orange!30}  \cellcolor{gray!30}\multirow{-7.5}{12mm} {\rotatebox[origin=p]{90}{\centering{\textbf{Sync FL}}}} &  \textbf{NomaFedHAP} &  \textbf{82.73}& \textbf{77.36}& \textbf{62.81}& \centering \textbf{24}& GS located at any location \\
 \hline 
\end{tabular}}
\end{table*}

\begin{table*}[!ht]

\setlength{\tabcolsep}{0.2em}
\centering
\renewcommand{\arraystretch}{1}
\caption{Accuracy and Convergence time of NomaFedHAP under various $\mathcal{PS}_s$ scenarios.} 
\label{DataSet_com}
 \begin{tabular}{||p{1.8cm}||p{1cm} | p{1.25cm} | p{1cm} | p{1.25cm}|| p{1cm} | p{1.25cm} | p{1cm} | p{1.25cm}|| p{1cm} | p{1.25cm} | p{1cm} | p{1.25cm}||}
\hline
\hline
&\multicolumn{4}{c ||} { MNIST Dataset } &\multicolumn{4}{c ||} { CIFAR-10 Dataset}  &\multicolumn{4}{c ||} { CIFAR-100 Dataset} \\ 
\hline
 \hline
&\multicolumn{2}{c |} {\small Accuracy (\%)} &\multicolumn{2}{c||} {\centering \small Converge Time (h)}&\multicolumn{2}{c |} {\small Accuracy (\%)} &\multicolumn{2}{c||} {\centering \small Converge Time (h)}&\multicolumn{2}{c |} {\small Accuracy (\%)} &\multicolumn{2}{c||} {\centering \small Converge Time (h)}\\ 
\cline{2-13}

  $\mathcal {PS}$&  \small{ IID}& \small{Non-IID}&\small{\rmfamily IID}& \small{Non-IID}&\small{\centering IID}& \small{Non-IID}&\small{ IID}& \small{Non-IID} &\small{IID}& \small{Non-IID}&\small{IID}& \small{Non-IID}\\
 \hline 
 
GS& 97.14& 90.69&24&36  & 88.28 &80.23 &32 &42 & 76.56& 71.99&52&61\\
 \hline
Single HAP & 93.12 &90.88 &3 &9.11 & 85.19 & 82.5&4.92 &12& 78.28 &72.82 &48 &54\\
 \hline
Two HAPs & 96.23 &93.67 &1.74&3.68 & 87.67 &83.29 &3.17& 4.38 & 77.67 & 75.13&24&  30\\
 \hline
 Three HAPs & 97.62 &95.19 &1.62 & 3& 89.13&84.67&2.27& 3.81 & 80.09 & 78.62&22&26\\
 \hline 
  \hline 
\end{tabular}
\end{table*}

\begin{figure*}[!t]
     \centering
         \subfloat[MNIST dataset.]{\centering 
         {\includegraphics[width=0.335\textwidth]{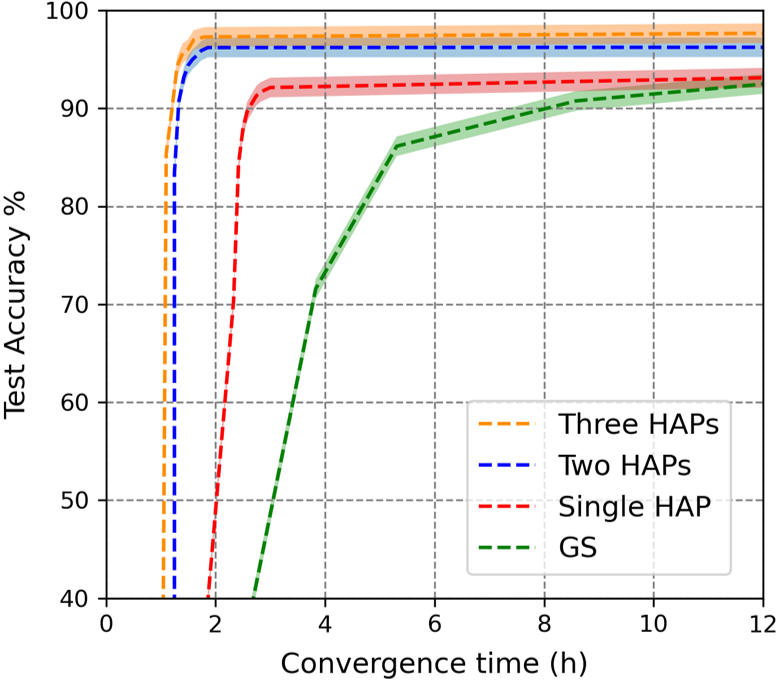}}}
     \hfill
     \subfloat[CIFAR-10 dataset.]{\centering 
         \includegraphics[width=0.33\textwidth]{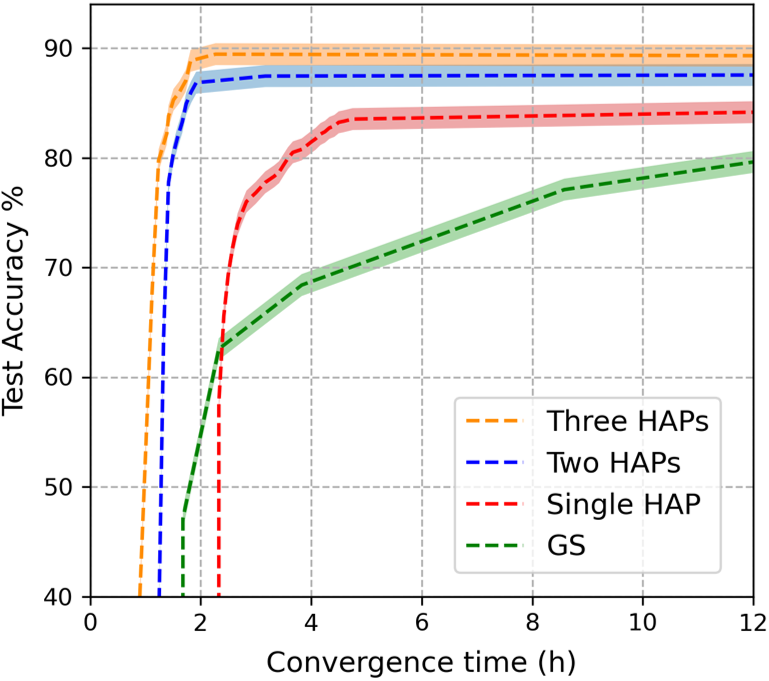}}
     \hfill
     \subfloat[CIFAR-100 dataset.]{\centering 
         \includegraphics[width=0.33\textwidth]{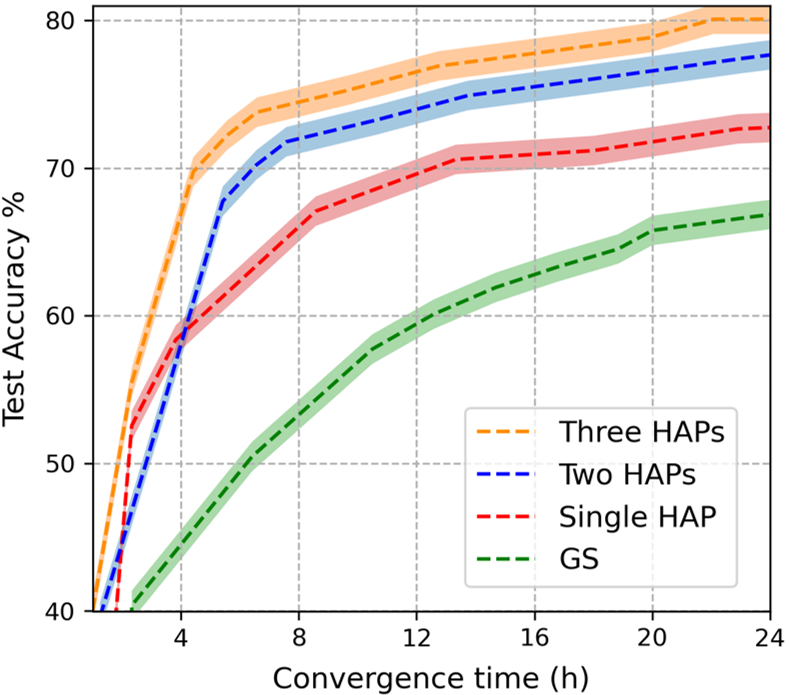}}
\hfill
        \caption{NomaFedHAP's accuracy over time for various datasets in the IID setting.}
\label{IID}
\vspace{3mm}
     \centering
         \subfloat[MNIST dataset.]{\centering 
         {\includegraphics[width=0.325\textwidth]{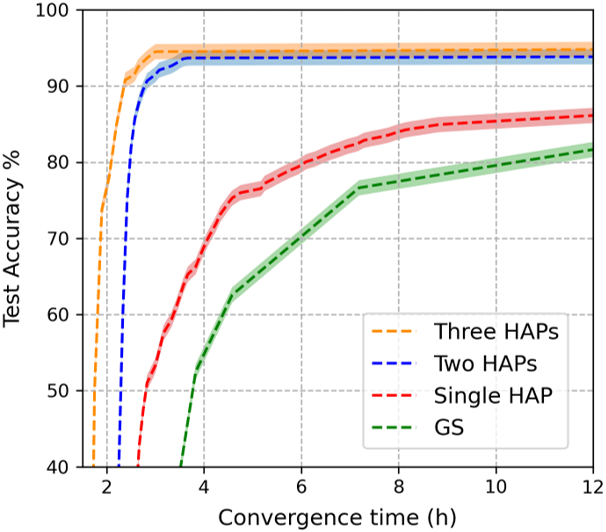}}}
     \hfill
     \subfloat[CIFAR-10 dataset.]{\centering 
         \includegraphics[width=0.32\textwidth]{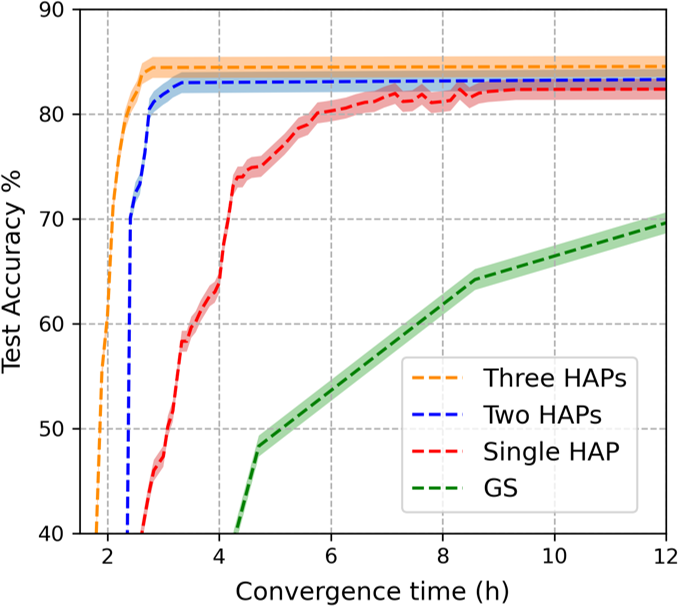}}
     \hfill
     \subfloat[CIFAR-100 dataset.]{\centering 
         \includegraphics[width=0.32\textwidth]{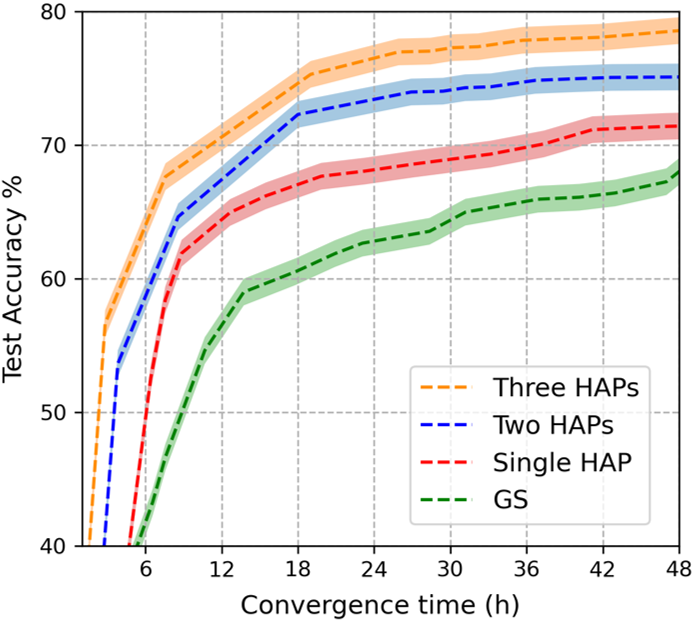}}
\hfill
        \caption{NomaFedHAP's accuracy over time for various datasets in the non-IID setting.}\vspace{-0.2cm}
\label{Non-IID}
\end{figure*}
Asynchronous approaches such as FedSat \cite{razmi2022ground}, or AsyncFLEO \cite{mAsyFLEO} can achieve better or comparable accuracy to NomaFedHAP. However, their usability in realistic scenarios is limited due to their oversimplified assumption that satellites should visit the GS in a regular manner or overlook the sink satellite's visibility period. When the authors of FedSat omitted this assumption by developing FedSatSchedule \cite{razmi2022scheduling}, the convergence speed is doubled to 48 hours, and the accuracy is reduced by 14-24\% in comparison with FedSat on different datasets. Although FedSatSchedule offers more realistic consideration, traditional FedAsync \cite{xie2020asynchronous} can achieve similar accuracy for the same convergence period. Despite FedSpace's \cite{so2022fedspace} efforts to balance idleness and staleness in synchronous approach and asynchronous FL approaches, respectively, its performance is limited to 50\% or less on the three datasets tested. 

For the synchronous FL approaches, FedISL \cite{razmi} is the fastest synchronous FL approach with a convergence time of 8 hours and an accuracy of 82.76\%, 73.62\%, and 66.57\% on MNIST, CIFAR-10, and CIFAR-100, respectively, when the GS is located at the NP. However, when the GS location is changed, its accuracy drops to 61.06\%, 52.11\%, and 47.99\% on the same datasets after 72 hours of training.  DSFL\cite{wu2022dsfl} is the second fastest, converging within 19 hours, however, it suffers from the Doppler shift due to the inter-orbit ISL communication. The comparison with the rest of the baselines is summarized in Table~\ref{table_comp}.

\begin{figure*}[!h]
\centering
    \subfloat[Samples of original satellite images.]{
         \centering
         {\includegraphics[width=\linewidth]{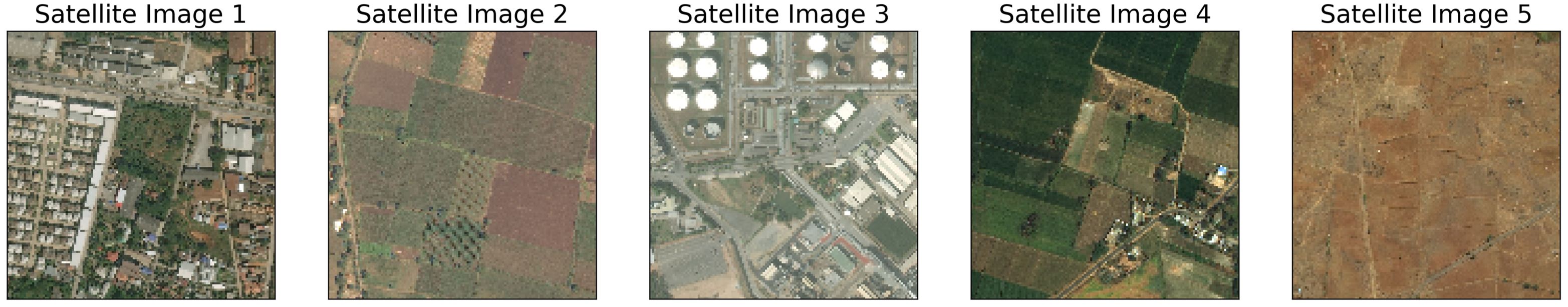}}}
  \hfill
     \subfloat[Corresponding ground truth labels of roads.]{
         \centering
         {\includegraphics[width=\linewidth]{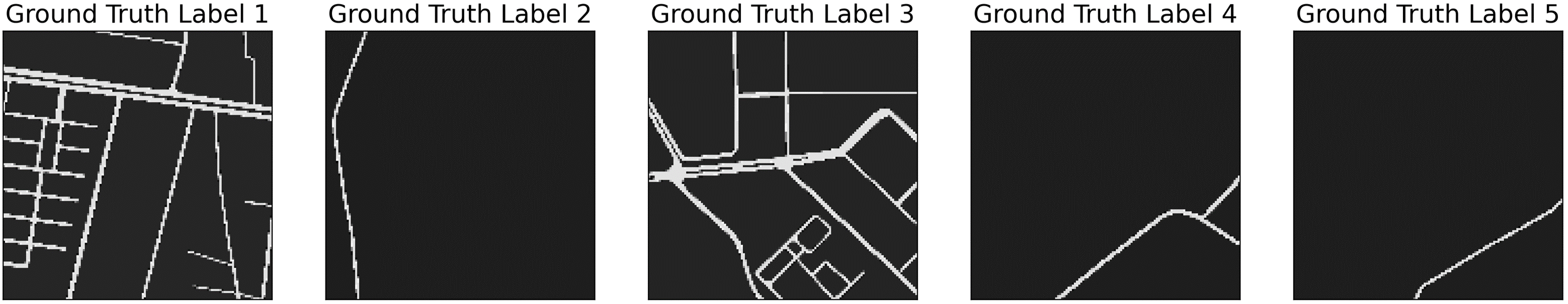}}}
  \hfill
    \subfloat[Corresponding predicated labels after 5 hours during convergence.]{
         \centering
       {\includegraphics[width=\linewidth]{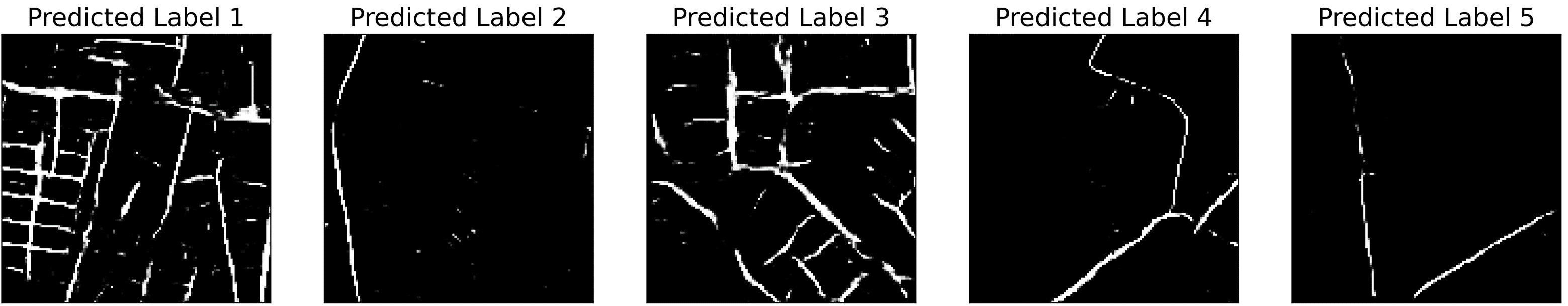}}}
  \hfill
\subfloat[Corresponding predicated labels after 10 hours during convergence.]{
         \centering
         {\includegraphics[width=\linewidth]{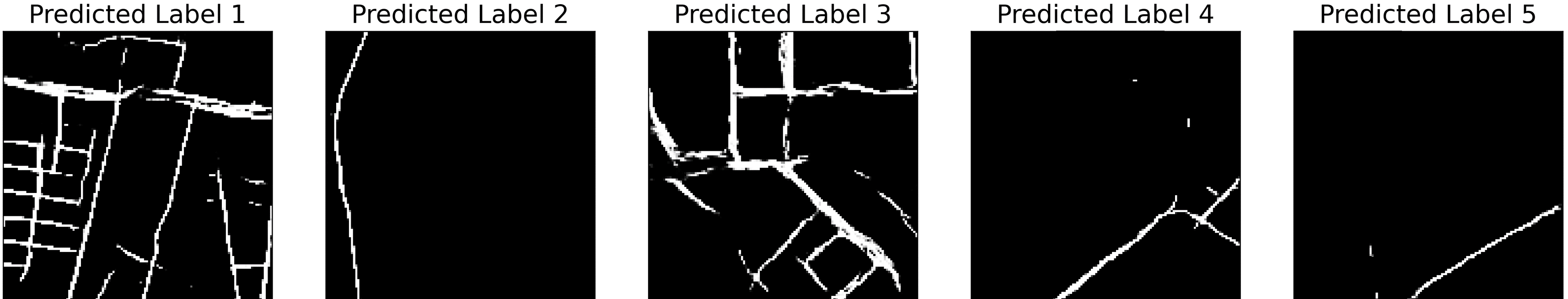}}}
        \caption{Comparison of NomaFedHAP's segmentation performance at two different timestamps (5 hours vs. 10 hours during convergence) over sample images from the DeepGlobe dataset.  The $\mathcal {PS}$ is a single HAP located at Rolla, Mo, USA.}
\label{DeepGlobe}\vspace{-0.2cm}
\end{figure*}
\subsubsection{\bf Evaluating NomaFedHAP in-depth} We extensively evaluate NomaFedHAP's performance across various scenarios, including various $\mathcal {PS}$ setups and datasets with different distribution settings. The left side of Table \ref{DataSet_com} shows the maximum achievable accuracy with respect to the convergence speed of NomaFedHAP on the MNIST dataset. The results indicate that utilizing HAP as $\mathcal {PS}$ instead of traditional $\mathcal {PS}$ significantly accelerates the convergence of the FL, reducing the convergence speed from days to just a few hours without sacrificing the target accuracy. When two/three HAPs were employed, there was a slightly different improvement in convergence speed for the IID setting compared to the single HAP scenario. However, the use of multiple HAPs had a more significant impact on the convergence speed for the non-IID settings, particularly when compared to the GS scenario. {\em These findings demonstrate that employing multiple HAPs as $\mathcal {PS}$s can enhance the convergence speed and have a substantial influence on the FL-LEO system.}
    
Table~\ref{DataSet_com} also shows the accuracy and convergence speed of NomaFedHAP using more complex datasets of CIFAR-10 and CIFAR-100. The trends observed on these datasets are similar to those seen on the MNIST dataset. According to our evaluation of NomaFedHAP's performance using these two datasets, we find that when the number of classes increased from 10 to 100, there was a reduction in accuracy of 10-15\% and an increase in convergence time of 14-45 hours. 
    
Fig.~\ref{IID} and Fig.~\ref{Non-IID} evaluate the performance of the NomaFedHAP under various evaluation conditions on a larger scale. From these figures, we can infer using even a single HAP in lieu of GS can provide higher convergence accuracy with less convergence time by an order of magnitude. This advantage also still holds under various datasets and tough distribution (non-IID), which proves its effectiveness. 

\subsubsection{\bf Evaluating NomaFedHAP using real satellite imagery} We finally evaluate the performance of NomaFedHAP on real satellite images of the DeepGlobe dataset for extracting road. To assess NomaFedHAP's effectiveness on this dataset, we use two evaluation metrics: the Intersection-over-Union (IoU) and Dice coefficient (F1 score), which are more precise than pixel accuracy for segmentation tasks. The results demonstrate that NomaFedHAP can accurately detect roads with an IoU of 61.87\% and a Dice coefficient of 65.12\% after only 5 hours. These metrics improve to 72.68\% and 73.90\%, respectively, after 10 hours. In Fig.~\ref{DeepGlobe}, we present a sample of images after 5 hours vs. 10 hours, illustrating that NomaFedHAP can quickly achieve convergence without compromising model performance across various datasets.

Comparing NomaFedHAP with one of the baseline approaches \cite{e2023opt}, that baseline initially achieves an IoU of 36.18\% and a Dice coefficient of 40.11\% after 5 hours, which subsequently improves to 69.32\% and 72.76\% after 16 hours. However, NomaFedHAP consistently achieves significantly higher accuracy—approximately 25\% more than this particular approach. Moreover, other baseline approaches exhibit significantly longer convergence times with lower accuracy under the same settings. Notably, when incorporating realistic satellite images into the training process, NomaFedHAP demonstrates the ability to expedite convergence by at least a factor of 2 to 5 when compared to those baseline FL-LEO approaches.

\section{Conclusion}\label{conclusion}
This paper introduces NomaFedHAP, a novel FL framework tailored for LEO satellite constellations, leveraging HAPs as distributed $\mathcal {PS}$s to facilitate FL model training. NomaFedHAP tackles the challenge of much prolonged FL-LEO training time due to the irregular and sporadic LEO satellite connectivity with the $\mathcal {PS}$, and transient visibility windows. To that end, NomaFedHAP introduces NOMA into FL-LEO, enabling efficient utilization of SatCom bandwidth and fast model exchange between satellites and the $\mathcal {PS}$ within just seconds. Under our new communication architecture, we also derive a closed-form expression for the outage probability of the NS and FS scenarios as well as the entire system orchestrated by HAPs. NomaFedHAP consists of i) a new communication topology utilizing HAPs as relays to mitigate Doppler shift among satellites in different orbits, ii) a novel model propagation scheme for seamless model exchange between satellites and HAPs, and iii) an optimized model aggregation approach for balancing models from different orbits and shells to achieve rapid FL convergence. Extensive simulations demonstrate NomaFedHAP's superiority in rapid and efficient FL model convergence on realistic satellite datasets, while limiting the OP of NOMA to only 0.1\%, outperforming the state-of-the-art approaches by at least 5 times in terms of convergence speed.

\begin{appendices}
\section{Proof of Theorem.1}
Let $\boldsymbol{w}_{k}^\beta$ is the local model updated by satellite $k$ during communication round $\beta$ with SGD local iterations $E\geq1$ before transmitting the updated version to the $\mathcal{PS}$. Additionally, let $\mathcal{I}_{E}$ represents the set of global synchronization steps, denoted as $\mathcal{I}_{E}= \{nE|n = 1, 2, \dots\}$. Thus, the update equation for NomaFedHAP, with partially visible satellites, can be expressed as
\begin{equation}\label{eq:new_var}
    \boldsymbol{v}_{k}^{\beta+1} = \boldsymbol{w}_{k}^{\beta}- \zeta_{{\beta}} \nabla F_{k}(\boldsymbol{w}_{k}^{\beta},\xi_{k}^{\beta})
\end{equation}
\begin{equation}
\boldsymbol{w}_{k}^{\beta+1}=
\begin{cases}
        \boldsymbol{v}_{k}^{\beta+1} \hspace{2cm}\text{if}~\beta+1 \notin \mathcal{I}_{E},\\ \sum_{k=1}^{K}\alpha_{k}  \boldsymbol{v}_{k}^{\beta+1} \hspace{0.5cm}\text{if}~\beta+1 \in \mathcal{I}_{E}.
\end{cases}
\end{equation}
In Equation~\eqref{eq:new_var}, we introduce an additional variable $\boldsymbol{v}_{k}^{\beta+1}$, which represents the immediate result of one step of SGD from $\boldsymbol{w}_{k}^{\beta}$. Here, $\boldsymbol{w}_{k}^{\beta+1}$ can be viewed as the model obtained after the aggregation round $\beta+1$, which corresponds to a global synchronization step.

Motivated by \cite{ribero2022federated}, we define two virtual sequences, namely, $\boldsymbol{\Bar {v}}^{\beta}=\sum_{k=1}^{K}\alpha_{k}\boldsymbol{\Bar v}_{k}^{\beta}$ and $\boldsymbol{\Bar w}^{\beta}=\sum_{k=1}^{K}\alpha_{k}\boldsymbol{\Bar w}_{k}^{\beta}$. We can therefore interpret $\boldsymbol{\Bar {v}}^{\beta+1}$ as the result of a single-step SGD update from $\boldsymbol{\Bar w}^{\beta}$. When $\beta+1$ is not within $\mathcal{I}_E$, both $\boldsymbol{\Bar {v}}^{\beta}$ and $\boldsymbol{\Bar {w}}^{\beta}$ are inaccessible. However, if $\beta+1$ is part of $\mathcal{I}_E$, we can only access $\boldsymbol{\Bar w}^{\beta+1}$. For convenience, we define $\mathbf{\Bar g}^{\beta}=\sum_{k=1}^{K}\alpha_{k}\nabla F_{k}(\boldsymbol{w}_{k}^{\beta})$ and $\mathbf{g}^{\beta}=\sum_{k=1}^{K}\alpha_{k}\nabla F_{k}(\boldsymbol{w}_{k}^{\beta},\xi_{k}^{\beta})$. Thus, we have $\mathbb{E}\mathbf{g}^{\beta}=\mathbf{\Bar g}^{\beta}$ and $\boldsymbol{\Bar {v}}^{\beta+1}=\boldsymbol{\Bar {w}}^{\beta}-\zeta_{{\beta}} \mathbf{\Bar g}^{\beta}$
\begin{lemma}[Results of single-step SGD] 
Assuming Assumptions 1 and 2 are satisfied, if $\zeta_{{\beta}}\leq\frac{1}{4\Lambda}$, then we have:
\begin{align}
     &\mathbb{E}\|\boldsymbol{\Bar {v}}^{\beta+1}-\boldsymbol{w}^*||^2_2\leq (1-\zeta_{{\beta}}\varrho)\mathbb{E}\|\boldsymbol{\Bar {w}}^{\beta}-\boldsymbol{w}^*||^2_2+\zeta_{{\beta}}^2\mathbb{E}\|\mathbf{g}^{\beta}-\mathbf{\Bar g}^{\beta}\|^2_2\nonumber
     \\&+6\Lambda \zeta_{{\beta}}^2\Gamma+2\mathbb{E}\sum_{k=1}^{K}\alpha_{k}\|\boldsymbol{\Bar {w}}^{\beta}-\boldsymbol{{w}}^{\beta}_{k}\|^2_2.   \nonumber
\end{align}
where $\boldsymbol{w}^*$ is the target model that achieves the desired accuracy.   
\end{lemma}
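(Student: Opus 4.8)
The plan is to follow the single-step analysis standard for FedAvg-type updates, adapted to the virtual averaged iterate. The starting point is the identity $\boldsymbol{\Bar v}^{\beta+1}=\boldsymbol{\Bar w}^{\beta}-\zeta_{\beta}\mathbf{g}^{\beta}$ together with the unbiasedness relation $\mathbb{E}\mathbf{g}^{\beta}=\mathbf{\Bar g}^{\beta}$ established just before the lemma. First I would expand
$$\|\boldsymbol{\Bar v}^{\beta+1}-\boldsymbol{w}^*\|_2^2=\|\boldsymbol{\Bar w}^{\beta}-\boldsymbol{w}^*-\zeta_{\beta}\mathbf{\Bar g}^{\beta}\|_2^2-2\zeta_{\beta}\langle\boldsymbol{\Bar w}^{\beta}-\boldsymbol{w}^*-\zeta_{\beta}\mathbf{\Bar g}^{\beta},\mathbf{g}^{\beta}-\mathbf{\Bar g}^{\beta}\rangle+\zeta_{\beta}^2\|\mathbf{g}^{\beta}-\mathbf{\Bar g}^{\beta}\|_2^2,$$
and take expectation: the inner-product term vanishes because its leading factor is measurable given the current local models while $\mathbb{E}[\mathbf{g}^{\beta}-\mathbf{\Bar g}^{\beta}]=0$, which isolates the variance term $\zeta_{\beta}^2\mathbb{E}\|\mathbf{g}^{\beta}-\mathbf{\Bar g}^{\beta}\|_2^2$ that already appears in the target inequality. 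What remains is to bound the ``full-gradient'' term $\|\boldsymbol{\Bar w}^{\beta}-\boldsymbol{w}^*-\zeta_{\beta}\mathbf{\Bar g}^{\beta}\|_2^2$.

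Second, I would expand this into $\|\boldsymbol{\Bar w}^{\beta}-\boldsymbol{w}^*\|_2^2-2\zeta_{\beta}\langle\boldsymbol{\Bar w}^{\beta}-\boldsymbol{w}^*,\mathbf{\Bar g}^{\beta}\rangle+\zeta_{\beta}^2\|\mathbf{\Bar g}^{\beta}\|_2^2$ and treat the last two pieces through $\mathbf{\Bar g}^{\beta}=\sum_k\alpha_k\nabla F_k(\boldsymbol{w}_k^{\beta})$. For the quadratic piece, convexity of $\|\cdot\|_2^2$ gives $\|\mathbf{\Bar g}^{\beta}\|_2^2\le\sum_k\alpha_k\|\nabla F_k(\boldsymbol{w}_k^{\beta})\|_2^2$, and $\Lambda$-smoothness (Assumption~\ref{as:1}) yields $\|\nabla F_k(\boldsymbol{w}_k^{\beta})\|_2^2\le 2\Lambda(F_k(\boldsymbol{w}_k^{\beta})-F_k^*)$. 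For the cross term I would split $\boldsymbol{\Bar w}^{\beta}-\boldsymbol{w}^*=(\boldsymbol{\Bar w}^{\beta}-\boldsymbol{w}_k^{\beta})+(\boldsymbol{w}_k^{\beta}-\boldsymbol{w}^*)$: the first part is controlled by Young's inequality, producing a model-divergence term $\sum_k\alpha_k\|\boldsymbol{\Bar w}^{\beta}-\boldsymbol{w}_k^{\beta}\|_2^2$ and another gradient-norm term, while the second part is handled by $\varrho$-strong convexity (Assumption~\ref{as:2}), contributing $-\zeta_{\beta}\varrho\sum_k\alpha_k\|\boldsymbol{w}_k^{\beta}-\boldsymbol{w}^*\|_2^2$ and $-2\zeta_{\beta}\sum_k\alpha_k(F_k(\boldsymbol{w}_k^{\beta})-F_k(\boldsymbol{w}^*))$. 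The contraction factor $(1-\zeta_{\beta}\varrho)$ then emerges after applying Jensen's inequality $\|\boldsymbol{\Bar w}^{\beta}-\boldsymbol{w}^*\|_2^2\le\sum_k\alpha_k\|\boldsymbol{w}_k^{\beta}-\boldsymbol{w}^*\|_2^2$ to convert the per-client distance into the averaged one.

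The hardest step is consolidating the residual function-value terms. Collecting the two smoothness contributions with the strong-convexity gap leaves $D:=4\Lambda\zeta_{\beta}^2\sum_k\alpha_k(F_k(\boldsymbol{w}_k^{\beta})-F_k^*)-2\zeta_{\beta}\sum_k\alpha_k(F_k(\boldsymbol{w}_k^{\beta})-F_k(\boldsymbol{w}^*))$, which after writing $F_k(\boldsymbol{w}_k^{\beta})-F_k(\boldsymbol{w}^*)=(F_k(\boldsymbol{w}_k^{\beta})-F_k^*)-(F_k(\boldsymbol{w}^*)-F_k^*)$ and using $\sum_k\alpha_k(F_k(\boldsymbol{w}^*)-F_k^*)=\Gamma$ becomes $-2\zeta_{\beta}(1-2\Lambda\zeta_{\beta})\sum_k\alpha_k(F_k(\boldsymbol{w}_k^{\beta})-F_k^*)+2\zeta_{\beta}\Gamma$. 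This cannot be bounded by $6\Lambda\zeta_{\beta}^2\Gamma$ by merely discarding the nonpositive term, so I would instead lower-bound $\sum_k\alpha_k(F_k(\boldsymbol{w}_k^{\beta})-F_k^*)$ by decomposing through $F_k(\boldsymbol{\Bar w}^{\beta})$, using convexity $F_k(\boldsymbol{w}_k^{\beta})\ge F_k(\boldsymbol{\Bar w}^{\beta})+\langle\nabla F_k(\boldsymbol{\Bar w}^{\beta}),\boldsymbol{w}_k^{\beta}-\boldsymbol{\Bar w}^{\beta}\rangle$, then Young's inequality and $\|\nabla F_k(\boldsymbol{\Bar w}^{\beta})\|_2^2\le 2\Lambda(F_k(\boldsymbol{\Bar w}^{\beta})-F_k^*)$. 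This produces a second divergence term and, after the algebra, the bound $D\le 6\Lambda\zeta_{\beta}^2\Gamma+\sum_k\alpha_k\|\boldsymbol{\Bar w}^{\beta}-\boldsymbol{w}_k^{\beta}\|_2^2$, where the step-size restriction $\zeta_{\beta}\le\frac{1}{4\Lambda}$ is exactly what keeps $1-2\Lambda\zeta_{\beta}\ge\frac12$ and every discarded coefficient nonpositive. Combining this divergence term with the one from the earlier Young's split yields the factor-of-two coefficient $2\sum_k\alpha_k\|\boldsymbol{\Bar w}^{\beta}-\boldsymbol{w}_k^{\beta}\|_2^2$, and reinstating the variance term gives the claimed inequality. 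I expect this final consolidation---tracking the constants through the nested convexity, smoothness, and Young estimates so that the $\Gamma$-coefficient collapses precisely to $6\Lambda\zeta_{\beta}^2$ under $\zeta_{\beta}\le\frac{1}{4\Lambda}$---to be the main obstacle, while the remaining manipulations are routine bookkeeping.
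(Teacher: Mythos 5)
Your proposal is correct and follows the canonical single-step SGD argument for FedAvg (Li et al., which the paper cites); note that the paper itself states this lemma without proof, importing it from that literature, so there is no in-paper derivation to diverge from. You correctly identify the one genuinely delicate step --- consolidating the residual function-value terms by lower-bounding $\sum_k\alpha_k(F_k(\boldsymbol{w}_k^{\beta})-F_k^*)$ through $F_k(\boldsymbol{\Bar w}^{\beta})$ via convexity, Young's inequality, and smoothness so that the $\Gamma$-coefficient collapses to $6\Lambda\zeta_{\beta}^2$ under $\zeta_{\beta}\leq\frac{1}{4\Lambda}$ --- and your accounting of where the factor $2$ on the divergence term and the contraction factor $(1-\zeta_{\beta}\varrho)$ come from matches the standard derivation.
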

\begin{lemma}[Bounding the variance]
Assuming Assumption 3 is satisfied, then we have:
$$\mathbb{E}\|\mathbf{g}^{\beta}-\mathbf{\Bar g}^{\beta}\|^2_2\leq \sum_{k=1}^{K}\alpha_{k}^2\sigma_{k}^{2}.$$
\end{lemma}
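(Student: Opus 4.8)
The plan is to expand the weighted sum of per-satellite stochastic-gradient errors and then exploit the mutual independence of the sampling noise across satellites. First I would substitute the definitions $\mathbf{g}^{\beta}=\sum_{k=1}^{K}\alpha_{k}\nabla F_{k}(\boldsymbol{w}_{k}^{\beta},\xi_{k}^{\beta})$ and $\mathbf{\Bar g}^{\beta}=\sum_{k=1}^{K}\alpha_{k}\nabla F_{k}(\boldsymbol{w}_{k}^{\beta})$ to write the deviation as a single weighted sum of centered terms,
\begin{equation*}
\mathbf{g}^{\beta}-\mathbf{\Bar g}^{\beta}=\sum_{k=1}^{K}\alpha_{k}\boldsymbol{\epsilon}_{k}^{\beta},\qquad \boldsymbol{\epsilon}_{k}^{\beta}\triangleq\nabla F_{k}(\boldsymbol{w}_{k}^{\beta},\xi_{k}^{\beta})-\nabla F_{k}(\boldsymbol{w}_{k}^{\beta}),
\end{equation*}
so the target quantity becomes $\mathbb{E}\bigl\|\sum_{k=1}^{K}\alpha_{k}\boldsymbol{\epsilon}_{k}^{\beta}\bigr\|_2^2$. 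The two facts I would record about each $\boldsymbol{\epsilon}_{k}^{\beta}$ are that it is \emph{zero-mean} --- since $\xi_k^\beta$ is drawn so that $\mathbb{E}[\nabla F_k(\boldsymbol{w}_k^\beta,\xi_k^\beta)]=\nabla F_k(\boldsymbol{w}_k^\beta)$, which is exactly the unbiasedness already recorded as $\mathbb{E}\mathbf{g}^\beta=\mathbf{\Bar g}^\beta$ --- and that the errors are \emph{independent} across distinct satellites, because each satellite draws $\xi_k^\beta$ from its own local dataset $\mathcal{D}_k$ independently of the others.

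Next I would expand the squared norm into its diagonal and off-diagonal contributions,
\begin{equation*}
\mathbb{E}\Bigl\|\sum_{k=1}^{K}\alpha_{k}\boldsymbol{\epsilon}_{k}^{\beta}\Bigr\|_2^2=\sum_{k=1}^{K}\alpha_{k}^2\,\mathbb{E}\|\boldsymbol{\epsilon}_{k}^{\beta}\|_2^2+\sum_{k\neq j}\alpha_k\alpha_j\,\mathbb{E}\langle\boldsymbol{\epsilon}_{k}^{\beta},\boldsymbol{\epsilon}_{j}^{\beta}\rangle,
\end{equation*}
and argue that every cross term vanishes: for $k\neq j$, independence factorizes the expectation of the inner product as $\langle\mathbb{E}\boldsymbol{\epsilon}_{k}^{\beta},\mathbb{E}\boldsymbol{\epsilon}_{j}^{\beta}\rangle$, which is zero by the zero-mean property. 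Only the diagonal survives, leaving $\sum_{k=1}^{K}\alpha_k^2\,\mathbb{E}\|\boldsymbol{\epsilon}_{k}^{\beta}\|_2^2$. Applying Assumption~3 termwise, namely $\mathbb{E}\|\boldsymbol{\epsilon}_{k}^{\beta}\|_2^2\leq\sigma_k^2$, then gives the claimed bound $\sum_{k=1}^{K}\alpha_k^2\sigma_k^2$.

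The algebra is routine; the only load-bearing step is the vanishing of the cross terms, and hence the \emph{independence} of the sampling noise across satellites. This is where I would be most careful, because in the FL-LEO setting the iterates $\boldsymbol{w}_k^\beta$ are themselves coupled through the propagation and aggregation dynamics, so one cannot treat the full summands as independent. The correct reading is that it is the \emph{fresh noise} $\boldsymbol{\epsilon}_k^\beta$ that is independent and mean-zero \emph{conditional on} the current iterates, even though the iterates are correlated. I would therefore carry out the expansion as a conditional expectation given $\{\boldsymbol{w}_k^\beta\}_{k}$, invoke unbiasedness and cross-satellite independence of the fresh samples $\{\xi_k^\beta\}_{k}$ at that conditional level to kill the off-diagonal terms, bound each conditional diagonal variance by $\sigma_k^2$ via Assumption~3, and finally take the outer expectation by the tower property to obtain the unconditional statement.
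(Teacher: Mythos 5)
Your proof is correct: expanding the squared norm, killing the cross terms via the conditional independence and unbiasedness of the per-satellite sampling noise given the current iterates, bounding each diagonal term by $\sigma_k^2$ from Assumption~3, and closing with the tower property is exactly the standard argument, and your caveat about conditioning on $\{\boldsymbol{w}_k^\beta\}_k$ (rather than treating the full summands as independent) is the right way to make it rigorous. The paper itself states this lemma without proof, deferring to the FedAvg convergence literature it cites, where the proof is the same as yours, so there is nothing to reconcile.
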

\begin{lemma}[Bound $\boldsymbol{{w}}^{\beta}_{k}$ divergence]
Assuming Assumption 4 is satisfied and $\zeta_{\beta}\leq2\zeta_{\beta+E}$ is non-increasing $\forall~\beta\geq 1$, then we have:
$$  \mathbb{E}\Bigg[\sum_{k=1}^{K}\alpha_{k}\|\boldsymbol{{\Bar w}}^{\beta}_{k}-\boldsymbol{{w}}^{\beta}_{k}\|^2_2\Bigg]\leq4\zeta_{{\beta}}^2(E-1)^2 G^2.$$
\end{lemma}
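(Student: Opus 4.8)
The plan is to bound the per-round drift of the local iterates over a single synchronization interval, following the device used in the FedAvg analysis of \cite{Li2020On}. First I would fix a round $\beta$ and locate the most recent synchronization step $\beta_0 \le \beta$; because $\mathcal{I}_E$ places a synchronization (hence a reset $\boldsymbol{w}_k^{\beta_0}=\boldsymbol{\Bar{w}}^{\beta_0}$ for every $k$) every $E$ steps, such a $\beta_0$ exists with $\beta-\beta_0\le E-1$. This reduces the claim to controlling how far each $\boldsymbol{w}_k^\beta$ can move away from the common starting iterate $\boldsymbol{\Bar{w}}^{\beta_0}$ during the at most $E-1$ intervening SGD updates.

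The conceptual crux is to replace the moving average $\boldsymbol{\Bar{w}}^\beta$ by the fixed iterate $\boldsymbol{\Bar{w}}^{\beta_0}$ using the fact that the $\alpha$-weighted mean minimizes the weighted sum of squared distances. Since $\boldsymbol{\Bar{w}}^\beta=\sum_k\alpha_k\boldsymbol{w}_k^\beta$ and $\sum_k\alpha_k=1$, the bias--variance identity $\sum_k\alpha_k\|x_k-\Bar{x}\|_2^2=\sum_k\alpha_k\|x_k-c\|_2^2-\|\Bar{x}-c\|_2^2$ applied with $c=\boldsymbol{\Bar{w}}^{\beta_0}$ gives
\[
\mathbb{E}\sum_{k=1}^{K}\alpha_k\|\boldsymbol{w}_k^\beta-\boldsymbol{\Bar{w}}^\beta\|_2^2 \le \mathbb{E}\sum_{k=1}^{K}\alpha_k\|\boldsymbol{w}_k^\beta-\boldsymbol{\Bar{w}}^{\beta_0}\|_2^2 ,
\]
which decouples the $K$ clients and removes the cross terms.

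Next I would unroll the single-step recursion \eqref{eq:new_var} over the interval, writing $\boldsymbol{w}_k^\beta-\boldsymbol{\Bar{w}}^{\beta_0}=-\sum_{t=\beta_0}^{\beta-1}\zeta_t\nabla F_k(\boldsymbol{w}_k^t,\xi_k^t)$. Applying the Jensen/Cauchy--Schwarz bound $\|\sum_t a_t\|_2^2\le(\beta-\beta_0)\sum_t\|a_t\|_2^2$ with $\beta-\beta_0\le E-1$, taking expectations, and invoking Assumption~\ref{as:4} termwise ($\mathbb{E}\|\nabla F_k(\cdot)\|_2^2\le G^2$) yields $\mathbb{E}\|\boldsymbol{w}_k^\beta-\boldsymbol{\Bar{w}}^{\beta_0}\|_2^2\le(E-1)\sum_{t=\beta_0}^{\beta-1}\zeta_t^2 G^2$. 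Since $\zeta_t$ is non-increasing, each $\zeta_t\le\zeta_{\beta_0}$, so this is at most $(E-1)^2\zeta_{\beta_0}^2 G^2$; summing over $k$ with $\sum_k\alpha_k=1$ leaves the same bound.

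The last step, which I expect to be the fiddly part rather than the analytic one, is converting $\zeta_{\beta_0}$ into $\zeta_\beta$. From $\beta_0\ge\beta-(E-1)$ we get $\beta\le\beta_0+E$, so monotonicity gives $\zeta_{\beta_0+E}\le\zeta_\beta$, and chaining with the hypothesis $\zeta_{\beta_0}\le 2\zeta_{\beta_0+E}$ produces $\zeta_{\beta_0}\le 2\zeta_\beta$, i.e.\ $\zeta_{\beta_0}^2\le 4\zeta_\beta^2$. Substituting this back gives the claimed bound $4\zeta_\beta^2(E-1)^2 G^2$. The main obstacle throughout is purely the index bookkeeping---pinning down $\beta_0$ and correctly chaining the two step-size conditions---since the mean-minimization identity and the bounded-gradient assumption carry all of the real content.
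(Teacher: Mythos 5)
Your proof is correct and is essentially the canonical argument (the one in Li et al.\ \cite{Li2020On}, Lemma~3) that the paper itself relies on: the paper states this lemma without supplying a proof, so your derivation fills in exactly the intended steps --- reset at the last synchronization index $\beta_0$ with $\beta-\beta_0\le E-1$, the mean-minimization inequality to replace $\boldsymbol{\Bar w}^{\beta}$ by $\boldsymbol{\Bar w}^{\beta_0}$, Cauchy--Schwarz plus Assumption~4 to get $(E-1)^2\zeta_{\beta_0}^2G^2$, and the chain $\zeta_{\beta_0}\le 2\zeta_{\beta_0+E}\le 2\zeta_{\beta}$ to land on the factor of $4$. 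The index bookkeeping in your last step is handled correctly (from $\beta\le\beta_0+E$ and monotonicity you get $\zeta_{\beta_0+E}\le\zeta_{\beta}$ in the needed direction), so there is nothing to repair.
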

\begin{proof}
Let $\Delta^\beta=\mathbb{E}\|\boldsymbol{\Bar {w}}^{\beta+1}_{k}-\boldsymbol{{w}}^*\|^2_2$, and $\boldsymbol{\Bar {w}}^{\beta+1}=\boldsymbol{\Bar {v}}^{\beta+1}$ in both scenarios, whether $\beta+1\in \mathcal{I}_E$ or $\beta+1 \notin \mathcal{I}_E$. Drawing upon the results of Lemmas 1-3, we have
\begin{equation}
    \Delta^{\beta+1}\leq(1-\zeta_{{\beta}} \varrho)\Delta^{\beta}+\zeta_{{\beta}}^2 Z
\end{equation}
For a decreasing step size, let $\zeta_{\beta}=\frac{\varepsilon}{\beta+ \delta}$, where $\varepsilon>\frac{1}{\varrho}$ and $\delta>0$ are chosen such that $\zeta_{1}\leq \min\{\frac{1}{\varrho}, \frac{1}{4\Lambda}\}=\frac{1}{4\Lambda}$, and $\zeta_{\beta}\leq 2\zeta_{\beta+E}$. We aim to prove that $\Delta^\beta\leq\frac{\tau}{\beta+\delta}$, where $\tau=\max\{\frac{\varepsilon^2 Z}{\varepsilon \varrho-1}, (\delta+1)\Delta^{1}\}$. \\
To achieve this, we employ the induction method. Beginning with the base case of $\beta = 1$ and considering the definition of $\tau$, we ensure its hold  for some $\beta$ as follows
\begin{align}\nonumber
    \Delta^{\beta+1}&\leq(1-\zeta_{\beta}\varrho)\Delta^{\beta}+\zeta_{\beta}^2 Z \\\nonumber
    &=\biggl(1-\frac{\varepsilon\varrho}{\beta+\delta}\biggl)\frac{\tau}{\beta+\delta}+\frac{\varepsilon^2Z}{(\beta+\delta)^2}\\\nonumber
    &=\frac{\beta+\delta-1}{(\beta+\delta)^2}\tau+\biggl[\frac{\varepsilon^2Z}{(\beta+\delta)^2}-\frac{\varepsilon\varrho-1}{(\beta+\delta)^2}\tau\biggl]\\\nonumber
    &\leq\frac{\tau}{\beta+\delta+1}\\\nonumber
\end{align}
Leveraging Assumption 2, which asserts the strong convexity of $F(\cdot)$, we obtain
$$\mathbb{E}[F(\boldsymbol{\Bar w}^{\beta})]-F^*\leq \frac{\Lambda}{2}\Delta^{\beta}\leq\frac{\Lambda}{2}\frac{\tau}{\delta+\beta}.
$$
In particular, with the choice of $\varepsilon=\frac{2}{\varrho}$ and $\delta=\max\{8\frac{\Lambda}{\varrho}-1, E\}$, and denoting $\upsilon=\frac{\Lambda}{\varrho}$, we have $\zeta_{\beta}=\frac{2}{\varrho}\frac{1}{\delta+\beta}$, therefore
$$\mathbb{E}[F(\boldsymbol{\Bar w}^{\beta})]-F^*\leq\frac{2\upsilon}{\delta+\beta}\biggl(\frac{Z}{\varrho}+2\Lambda\Delta^{1}\biggl)$$
\end{proof}\vspace{-0.9cm}
\end{appendices}
\footnotesize{\small
\bibliographystyle{IEEEtran}
\bibliography{biblio.bib}}

\end{document}